\documentclass{fairmeta}
\usepackage[utf8]{inputenc}
\usepackage{lmodern}
\usepackage{amsmath}
\usepackage{amsfonts}
\usepackage{amssymb}
\usepackage{amsthm}
\usepackage{mathtools}
\usepackage{booktabs}
\usepackage{longtable}
\usepackage{float}
\usepackage{multirow}
\usepackage{colortbl}
\usepackage{adjustbox}
\usepackage{array}
\usepackage{url}

\newtheorem{lemma}{Lemma}

\title{VTrajEmbed: Aligning Text with Video Object Trajectories via Spatiotemporal Embeddings}
\author[1,*]{Ruizhi Chen}
\author[2,*]{Zining Fan}
\author[3,*]{Yudi Guo}
\author[1,4]{Jiang Liu}
\author[1]{Zhiyu Lin}
\author[1]{Zhongjiang He}
\author[3,\dagger]{Shouqian Shi}
\author[1,\dagger]{Hao Sun}
\affiliation[1]{Xingchen AGI Lab, China Telecom Artificial Intelligence Technology (Beijing) Co., Ltd.}
\affiliation[2]{Peking University}
\affiliation[3]{Nanjing University}
\affiliation[4]{Xi'an Jiaotong University}
\contribution[*]{Equal contribution.}
\contribution[\dagger]{Corresponding authors.}
\metadata[Correspondence]{Shouqian Shi and Hao Sun}
\date{October 13, 2026}

\abstract{
Fine-grained alignment between text and video object trajectories is a core challenge in multimodal understanding. Existing embedding models excel at global retrieval, yet their single global representation entangles multiple objects and their spatiotemporal dynamics, limiting direct object-level grounding. To address this issue, we propose \textbf{VTrajEmbed}, an embedding model that decomposes a video into object trajectories and aligns each trajectory embedding with text for object-level grounding through retrieval. Our main contributions: (1) \textbf{Slow-Fast Trajectory Encoding}, a dual-branch architecture that establishes trajectory representations by densely sampling low-resolution frames for high-frequency motion cues and sparsely selecting high-resolution frames for scene context, aggregated by a shared LLM backbone; (2) \textbf{Mask-Guided Contrastive Learning}, a training objective built upon these representations that couples an intra-group loss and a batch-level bidirectional loss under a relation mask, excluding unverified relations and preventing confirmed matches from becoming false negatives; (3) \textbf{Text2Traj-Bench}, a trajectory-level text-video benchmark with model-assisted construction and human-verified query--trajectory relations. Experimental results show that VTrajEmbed-5B improves text-to-trajectory and trajectory-to-text R@1 over Qwen3-VL-Embedding-8B by \textbf{15.48} and \textbf{18.05} percentage points, respectively, and achieves 49.48\% average R@1 across five global retrieval benchmarks, compared with 42.86\% for Qwen3-VL-Embedding-8B.

}

\begin{document}

\maketitle

\section{Introduction}
\label{sec:introduction}

Multimodal embedding models have become a key technique for mapping data from different modalities into a unified semantic space. The contrastive vision-language pretraining paradigm~\cite{radford2021learning,Zhai_2023_ICCV} has learned high-quality alignment on large-scale image-text pairs, and video embedding models further incorporate temporal dynamics into unified representations~\cite{zhu2024languagebind,li2026qwen3}, achieving notable progress in video-level retrieval. However, this success relies on the premise of compressing an entire video into a single semantic unit, and when the task shifts from ``does this video match this sentence'' to ``which object in the scene corresponds to this description,'' the granularity limitation of embedding models becomes apparent.

Real-world applications such as autonomous driving, robotic manipulation, and digital content moderation~\cite{pmlr-v229-zitkovich23a,sima2024drivelm,li2026towards} demand object-level alignment. In the static-image domain, FG-CLIP~\cite{xie2025fg2} and ObjEmbed~\cite{fu2026objembed} achieve region-text matching through fine-grained annotation or object-level decomposition, yet these advances do not extend to video: general-purpose video embedding models typically compress multiple objects and their dynamics into a single global representation~\cite{wemm-embedding}, and training signals remain sample-level without object-level positive or negative examples. Video grounding tasks introduce spatiotemporal supervision~\cite{gu2025knowing,gao2026omniground} but output localized tubes rather than a shared text-trajectory embedding space. A concurrent work~\cite{chen2026travel} similarly leverages ego-trajectory similarity as training-only supervision, yet still produces single-vector representations without object-level alignment. How to equip embedding models with object-level alignment while keeping learned representations useful for broader video-level retrieval remains an open problem.

To address these challenges, we propose \textbf{VTrajEmbed}, an embedding model that uses \emph{object trajectories} as the basic alignment unit, mapping object-level visual representations and text into a shared semantic space.
Given a video and a user query, VTrajEmbed first uses a lightweight query parser to extract textual descriptions of the referenced objects, which serve as prompts for a segmentation-and-tracking model to generate candidate trajectories.
Each trajectory is then encoded through a Slow-Fast dual-branch architecture, where dense low-resolution observations capture object appearance and motion, while sparse high-resolution frames provide scene context.
A shared language-model backbone aggregates the trajectory features and separately encodes the original query, producing embeddings for similarity-based retrieval.
For training, a mask-guided contrastive objective learns from confirmed query-trajectory matches and mismatches while excluding unverified relations.
To provide relation supervision and evaluate object-level alignment, we construct \textbf{Text2Traj-Bench}, a model-assisted benchmark with human-verified query-trajectory relations.
Unlike query-conditioned proposal generation in deployment, Text2Traj-Bench generates candidates from fixed category prompts and freezes each video's candidate pool before query generation, ensuring that all trajectory-level evaluations use the same query-independent candidates while the associated verified relations define relevance for object-centric video retrieval.
Our main contributions: 

\begin{itemize}

\item A Slow-Fast trajectory encoding architecture that captures high-frequency motion features via low-resolution dense sampling and scene context via high-resolution sparse sampling.

\item A mask-guided contrastive learning objective that enables object-level training, coupling local discrimination and bidirectional batch retrieval under a query-trajectory relation mask.

\item A model-assisted, human-verified benchmark that couples within-video evaluation with dense within-video and sparse cross-video relation annotations.

\item VTrajEmbed-5B achieves 69.67\% text-to-trajectory and 66.34\% trajectory-to-text R@1, substantially outperforming Qwen3-VL-Embedding-8B (54.19\% and 48.29\%). In the global retrieval evaluation, its trajectory-derived representations average 49.48\% R@1 across five widely used datasets, versus 42.86\% for the same baseline.

\end{itemize}

\section{Related Work}
\label{sec:related_work}

\textbf{Multimodal Embedding Models.}
Multimodal embedding models align images, videos, and text in a shared semantic space for similarity-based retrieval.
CLIP~\cite{radford2021learning} and SigLIP~\cite{Zhai_2023_ICCV} established large-scale contrastive image-text pretraining for cross-modal alignment.
Recent advances include retrieval data synthesis with MegaPairs~\cite{zhou2024megapairs}, universal embeddings from Jina Embeddings v4~\cite{jinaembeddingsv4}, MM-Embed~\cite{lin2025mm}, GME~\cite{zhang2025bridging}, and WeMM-Embedding~\cite{wemm-embedding}, and CLIP enhancement through LLM2CLIP~\cite{huang2026llm2clip}.

For video-text retrieval, \citet{fang2023transferring} transfer image-CLIP through temporal difference and alignment modules, while MASCOT~\cite{fang2023mascot} uses semantic masking and completion to capture text-related visual details.
LanguageBind~\cite{zhu2024languagebind}, InternVideo2~\cite{wang2024internvideo2}, and Qwen3-VL-Embedding~\cite{li2026qwen3} further advance video-text representations.
These methods primarily support clip-level retrieval, whereas VTrajEmbed aligns individual object trajectories with text.

\textbf{Object-Level Multimodal Alignment.}
Image-based alignment has progressed from region-language pretraining to explicit object representations.
RegionCLIP~\cite{zhong2022regionclip} aligns region proposals with textual concepts, GLIP~\cite{li2022glip} casts detection and phrase grounding into a unified pretraining task, and OWL-ViT~\cite{minderer2022owlvit} adapts image-text pretraining to open-vocabulary detection.
KOSMOS-2~\cite{peng2024kosmos2} supports grounded localization, SigLIP 2~\cite{tschannen2025siglip} improves dense visual features, and Motto~\cite{gao2026motto} combines spatially grounded thought tokens with reasoning for free-form grounding.

GOAL~\cite{choi2025goal} aligns objects with long descriptions, PixCLIP~\cite{xiao2025pixclip} learns pixel-text alignment, VLM-FO1~\cite{liu2025vlmfo1} retrieves region-token features for fine-grained perception, and MMCS~\cite{xiang2026mmcs} interleaves visual objects with language for object-level supervision.
FG-CLIP~\cite{xie2025fg2} and ObjEmbed~\cite{fu2026objembed} are especially relevant: the former improves fine-grained vision-language alignment, while the latter learns static-image object embeddings.
VTrajEmbed instead aligns text with video trajectories, jointly encoding appearance, motion, and temporal context.

\textbf{Open-Vocabulary Object Tracking and Segmentation.}
Open-vocabulary perception provides candidates for object-centric video retrieval.
Grounding DINO~\cite{liu2024grounding}, YOLO-World~\cite{cheng2024yolo}, DINO-X~\cite{ren2024dino}, and LLMDet~\cite{fu2025llmdet} localize objects beyond closed training categories.
SAM 2~\cite{ravi2025sam} extends promptable segmentation to videos, and SAM 3~\cite{carion2026sam} adds concept-prompted detection, segmentation, and tracking.
OVTR~\cite{li2025ovtr}, ReferDINO~\cite{liang2025referdino}, and CovTrack~\cite{qian2025covtrack} address open-vocabulary tracking, referring video object segmentation, and continuous open-vocabulary tracking.
DEMR~\cite{huang2026demr} models uncertainty for query-conditioned temporal localization, whereas Human-Centric Video Grounding~\cite{tang2021human}, Knowing Your Target~\cite{gu2025knowing}, and OmniGround~\cite{gao2026omniground} study spatio-temporal grounding.

These systems output temporal segments, locations, masks, tubes, or tracks.
VTrajEmbed instead encodes each candidate trajectory into a reusable text-aligned embedding for bidirectional retrieval, separating candidate generation from semantic matching.

\section{Method}
\label{sec:method}
VTrajEmbed follows a proposal-and-ranking pipeline (Figure~\ref{fig:vtrajembed-framework}).
A lightweight query parser extracts the subject to prompt a frozen segmentation-and-tracking module for query-conditioned trajectory proposals.
Slow-Fast encoding integrates appearance, motion, and scene context for similarity ranking against the original query in a shared embedding space.
Mask-guided bidirectional contrastive learning uses verified query-trajectory relations to jointly optimize within-video discrimination and bidirectional retrieval.
Text2Traj-Bench provides supervision and controlled evaluation: query-independent pools frozen before query generation supply common trajectory-level candidates, while verified relations establish relevance for cross-video evaluation.

\begin{figure}[!t]
  \centering
  \includegraphics[width=0.92\linewidth]{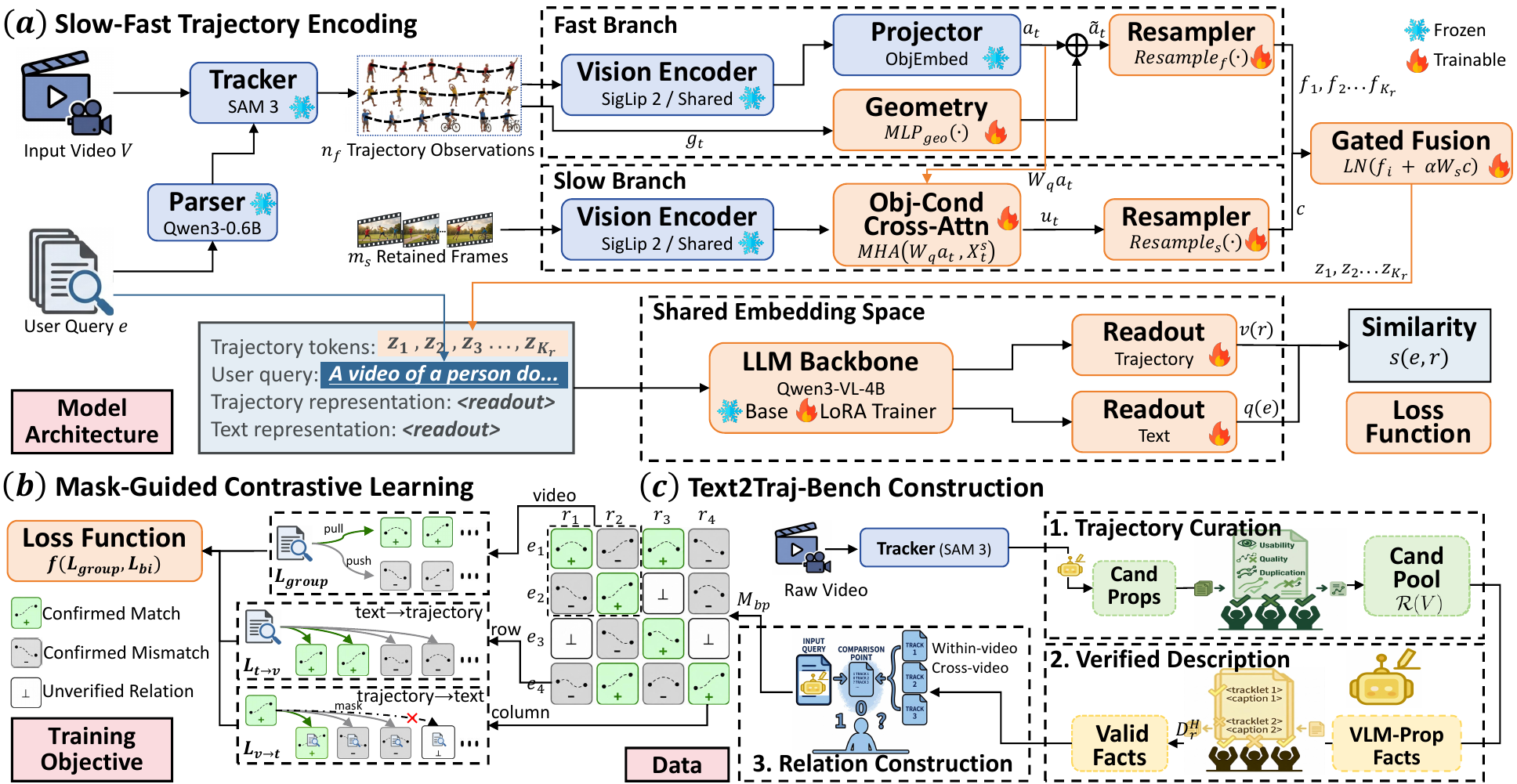}
  \caption{\textbf{Overview of VTrajEmbed.}
  \textbf{(a) Model architecture.} The Slow-Fast trajectory encoder combines dense low-resolution object observations with sparse high-resolution scene context and maps trajectories and text into a shared embedding space for similarity-based retrieval.
  \textbf{(b) Training objective.} Mask-guided contrastive learning jointly optimizes within-video trajectory discrimination and bidirectional trajectory-text retrieval using verified query-trajectory relations while excluding uncertain pairs.
  \textbf{(c) Data.} Text2Traj-Bench uses trajectory curation, verified description, and relation construction to produce fixed per-video candidate pools and human-verified annotations, including descriptions and within- and cross-video query-trajectory relations.}
  \label{fig:vtrajembed-framework}
\end{figure}

\subsection{Slow-Fast Trajectory Encoding}
\label{sec:slowfast}

Given a video \(V\) and query \(e\), VTrajEmbed retrieves the best-matching object trajectory by ranking candidates in a shared embedding space. 
A lightweight parser extracts the target description from \(e\) to prompt a frozen segmentation-and-tracking model, yielding candidates $\mathcal{R}(V,e)=\{r_1,\ldots,r_K\}$.
Each trajectory records the object's spatial extent and observation timestamps. 
VTrajEmbed independently embeds \(e\) and each candidate in a shared space and retrieves $\hat r=\arg\max_{r\in\mathcal{R}(V,e)}s(e,r)$ by trajectory-text similarity \(s(e,r)\).
Learning targets trajectory representation and cross-modal ranking; candidate generation remains frozen. 
For controlled evaluation, Text2Traj-Bench instead uses the query-independent pool \(\mathcal{R}_{\mathrm{bench}}(V)\), fixed before query generation and shared across models in trajectory-level evaluation; its verified relations establish relevance for object-centric video retrieval.

For each candidate, the Slow-Fast encoder combines dense low-resolution observations of fine-grained object appearance and motion with sparse high-resolution, object-conditioned scene context, as shown in Figure~\ref{fig:vtrajembed-framework}(a).
Gated fusion yields an ordered feature sequence that the shared language-model backbone aggregates into a trajectory embedding. 
A separate text-only pass through the same backbone embeds the original query.

\textbf{Fast branch: ordered object evidence.}
It encodes ordered object appearance and geometry from $n_f$ uniformly sampled valid observations, processing their full frames chronologically at low pixel budget $P_f$. 
The frozen vision encoder produces spatial visual features, from which a frozen ROI projector extracts an object appearance feature $a_t$ using the tracked box at time $t$. 
A trainable MLP embeds $g_t$, comprising normalized box center, size, area, inter-observation displacement, relative scale change, and original timestamp, to expose motion beyond appearance. 
Consecutive observations are grouped into temporal blocks $B_i$ and pooled by a shared attention resampler:
\begin{equation}
\label{eq:fast-blocks}
\widetilde a_t=a_t+\operatorname{MLP}_{\mathrm{geo}}(g_t),\qquad
f_i=\operatorname{Resample}_f([\widetilde a_t]_{t\in B_i}),\quad i=1,\ldots,K_r.
\end{equation}
The resulting $(f_1,\ldots,f_{K_r})$ compresses local observations while retaining block order for temporal modeling in the language backbone. Timestamps expose irregular observation gaps; uniform list sampling does not imply uniform physical-time spacing.

\textbf{Slow branch: object-conditioned scene context.}
The Slow branch complements the Fast sequence with sparse high-resolution scene context. 
It uniformly preselects $n_s$ frames from ordered Fast observations and scans them chronologically using unit-normalized semantic image embeddings from a frozen auxiliary encoder.
It retains the first frame and discards later frames only if cosine similarity to an already retained representative reaches $\tau$.
Representatives are never deleted and removed frames are not replenished. 
The retained set $S$ contains $m_s\leq n_s$ frames and a representative for every discarded frame; only these retained frames incur high-resolution vision encoding.
This pruning leaves the Fast observations and temporal blocks intact.

Object-conditioned cross-attention and gated fusion integrate the retained scene context into each valid Fast block.
The branches share the frozen vision encoder's weights but run separate forward passes at pixel budgets $P_f$ and $P_s>P_f$. At a retained time $t$, the matching Fast appearance feature $a_t$ queries the high-resolution full-frame token bank $X_t^s$ through cross-attention. 
A Slow attention resampler pools the resulting context features, and a learned residual gate injects their summary into each valid Fast block:
\begin{equation}
\label{eq:fusion}
u_t=\operatorname{MHA}(W_q a_t,X_t^s,X_t^s),\qquad
c=\operatorname{Resample}_s(\{u_t:t\in S\}),\qquad
z_i=\operatorname{LN}(f_i+\alpha W_s c).
\end{equation}
Here $W_q$ and $W_s$ are learned projections and $\alpha$ controls the context contribution. 
The shared $c$ summarizes surroundings across retained frames without frame-aligned interactions; event order remains in the Fast sequence and timestamps.
The design allocates temporal detail to object observations and spatial detail to contextual evidence, without assuming a strict frequency decomposition. Small objects and rapidly changing interactions may still require more spatial or temporal detail.

\textbf{Shared embedding space.}
The shared language backbone independently embeds trajectories and text for similarity-based retrieval. 
Projected, ordered $z_i$ fill the visual slots; a designated visual readout token yields $h_v$ and hence $v(r)=\operatorname{L2Norm}(W_vh_v)$. 
In a separate text-only pass, a query-specific text prompt contains the expression and a text readout, yielding $q(e)=\operatorname{L2Norm}(W_th_e)$ from text readout $h_e$ without active trajectory evidence.
Scores are $s(e,r)=\exp(\ell)q(e)^\top v(r)$, with learned logit scale $\ell$. 
Neither pass uses the other modality's input, allowing one query encoding and reusable trajectory embeddings for a fixed pool.
Appendix~\ref{app:encoding} gives the ROI implementation, geometry definition, and input templates.

\begin{lemma}[Task-relevant preservation under Slow-Fast sampling]
\label{lem:dualres}
Let $S_0$ contain $n_s$ preselected frames, and let semantic pruning retain $S\subseteq S_0$. Every frame $i$ is assigned to a retained representative $\pi(i)$ with unit semantic image embeddings satisfying $\bar g_i^\top\bar g_{\pi(i)}\geq\tau$. Suppose the object-conditioned context features obey $\|u_i-u_{\pi(i)}\|_2\leq L_u\|\bar g_i-\bar g_{\pi(i)}\|_2+\eta$, and the resampler $R$ is $K_R$-Lipschitz under maximum per-frame perturbations at fixed input length. Define $c_0=R([u_i]_{i\in S_0})$, $c=R([u_j]_{j\in S})$, and
\begin{equation}
\label{eq:multiplicity-error}
\varepsilon_{\mathrm{mult}}=
\left\|R([u_{\pi(i)}]_{i\in S_0})-R([u_j]_{j\in S})\right\|_2.
\end{equation}
Then $\|c-c_0\|_2\leq K_R\rho_\tau+\varepsilon_{\mathrm{mult}}$, where $\rho_\tau=L_u\sqrt{2(1-\tau)}+\eta$. If $c_D$ is a dense-context reference with $\|c_0-c_D\|_2\leq\varepsilon_{\mathrm{pre}}$, and the normalized trajectory encoder is $K_c$-Lipschitz in context with the Fast sequence fixed, then
\begin{equation}
\label{eq:slowfast-preservation}
\|v(r)-v_D(r)\|_2\leq
K_c\bigl(\varepsilon_{\mathrm{pre}}+K_R\rho_\tau+\varepsilon_{\mathrm{mult}}\bigr)
\equiv\delta_r.
\end{equation}
For a unit text embedding, a correct reference top-1 result with cosine margin $\gamma>2\max_r\delta_r$ is preserved.
\end{lemma}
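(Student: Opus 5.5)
The bound follows from a chain of triangle inequalities, and the only real device is an intermediate sequence in which each preselected frame's context feature is replaced by its representative's. Define
\[
\tilde c=R\bigl([u_{\pi(i)}]_{i\in S_0}\bigr).
\]
This sequence has the same length $n_s$ as $[u_i]_{i\in S_0}$, so the Lipschitz hypothesis on $R$ applies to the pair. The triangle inequality gives
\[
\|c-c_0\|_2\le\|c_0-\tilde c\|_2+\|\tilde c-c\|_2 .
\]
The second term is $\varepsilon_{\mathrm{mult}}$ by definition, so that term needs no work. The variable-length mismatch between $S_0$ and $S$ is exactly what $\varepsilon_{\mathrm{mult}}$ absorbs, and this is why the equal-length surrogate is introduced.

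For the first term, the $K_R$-Lipschitz property under the maximum per-frame perturbation gives
\[
\|c_0-\tilde c\|_2\le K_R\max_{i\in S_0}\|u_i-u_{\pi(i)}\|_2 .
\]
For each frame I apply the assumed regularity of the context features, which reduces the problem to bounding $\|\bar g_i-\bar g_{\pi(i)}\|_2$. Since both embeddings are unit vectors,
\[
\|\bar g_i-\bar g_{\pi(i)}\|_2^2=2-2\bar g_i^\top\bar g_{\pi(i)}\le 2(1-\tau).
\]
So each per-frame perturbation is at most $L_u\sqrt{2(1-\tau)}+\eta=\rho_\tau$. Retained frames are their own representatives, so their perturbation is $\eta$ or less, which is consistent with this bound. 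Combining the two terms gives $\|c-c_0\|_2\le K_R\rho_\tau+\varepsilon_{\mathrm{mult}}$.

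For \eqref{eq:slowfast-preservation}, I hold the Fast sequence fixed, so $v(r)$ and $v_D(r)$ differ only through the context argument. By the triangle inequality,
\[
\|c-c_D\|_2\le\|c-c_0\|_2+\|c_0-c_D\|_2\le K_R\rho_\tau+\varepsilon_{\mathrm{mult}}+\varepsilon_{\mathrm{pre}}.
\]
Multiplying by the $K_c$-Lipschitz constant of the normalized encoder in its context argument yields $\delta_r$.

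For the ranking claim, let $q$ be the unit text embedding. Cauchy–Schwarz gives, for every candidate,
\[
|q^\top v(r)-q^\top v_D(r)|\le\|v(r)-v_D(r)\|_2\le\delta_r .
\]
Let $r^\star$ be the reference top-1 result with margin $\gamma$, and let $r$ be any other candidate. Then
\[
q^\top v(r^\star)-q^\top v(r)\ge\gamma-\delta_{r^\star}-\delta_r\ge\gamma-2\max_r\delta_r>0,
\]
so $r^\star$ remains the strict argmax. The positive scale factor $\exp(\ell)$ does not change the ordering.

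The argument has no hard step; it needs care in two places. The first is the surrogate $\tilde c$: the Lipschitz property of $R$ applies only at fixed input length, so pruned and unpruned inputs cannot be compared directly. The second is that the Lipschitz constant $K_c$ must be the one for the \emph{normalized} output $v$, not for the pre-normalization readout.
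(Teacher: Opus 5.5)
Your proposal is correct and matches the paper's proof essentially step for step. Both use the equal-length proxy list $[u_{\pi(i)}]_{i\in S_0}$ with fixed-length stability and the unit-norm identity $\|\bar g_i-\bar g_{\pi(i)}\|_2^2=2-2\bar g_i^\top\bar g_{\pi(i)}$, then triangle inequalities through the proxy and through $c_0$, the $K_c$ step, and the Cauchy--Schwarz margin argument giving $\gamma-2\max_r\delta_r>0$ (your remark that retained frames perturb by at most $\eta$ is harmless, though their perturbation is in fact zero).
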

The result explains when sparse high-resolution context can replace dense context while retaining the Fast observations. The semantic-to-context condition and resampler stability are explicit assumptions; deleting similar frames is not automatically lossless. Appendix~\ref{app:proof1} proves the bounds, quantifies the multiplicity term for attention pooling, and accounts for sampling and encoding cost.

\subsection{Mask-Guided Contrastive Learning}
\label{sec:mask}

Mask-guided contrastive learning jointly optimizes within-video discrimination and bidirectional batch retrieval. 
Query-centered trajectory groups from annotated and automatically expanded data supply these comparisons, as shown in Figure~\ref{fig:vtrajembed-framework}. 
Multiple query-trajectory pairs may match, while many others remain unverified.
A relation mask therefore includes confirmed positives and negatives and excludes uncertain pairs. 
Training updates lightweight spatiotemporal modules and low-rank language adapters, keeping the proposal pipeline and pretrained encoder weights frozen. 
Appendix~\ref{app:training} details the training data and relation annotations.

\textbf{Supervision and valid comparisons.}
Each training group $b$ contains a query $e_b$, positives $\mathcal P_b$, and confirmed within-video negatives $\mathcal N_b$. 
For annotated data, positives include the annotated tube, its stochastically perturbed variant, and an identity-checked tracker output when available. 
These augmented positives are training-only and leave fixed benchmark candidate pools unchanged. 
Expanded examples pair generated expressions with verified target trajectories. 
Low overlap or large description distance suggests negative candidates, not confirmed semantic mismatches.

\textbf{Within-video discrimination.}
It maximizes probability mass on each group's full positive set.
Let $s_{br}=s(e_b,r)$ and $\mathcal B_N$ index groups with negatives:
\begin{equation}
\label{eq:lgroup}
L_{\mathrm{group}}=-\frac{1}{|\mathcal B_N|}\sum_{b\in\mathcal B_N}
\log\frac{\sum_{r\in\mathcal P_b}\exp(s_{br})}
{\sum_{r\in\mathcal P_b\cup\mathcal N_b}\exp(s_{br})}.
\end{equation}
All retained negatives share one denominator per group; set $L_{\mathrm{group}}=0$ when $\mathcal B_N=\varnothing$.

\textbf{Relations across query groups.}
Across query groups, masked bidirectional retrieval compares queries with the batch's indexed positive-record list $\mathcal P=[(b,r):b\in\mathcal B,r\in\mathcal P_b]$. 
For query $b$ and record $p$, the relation mask $M_{bp}$ is $1$ for a confirmed match, $0$ for a confirmed mismatch, and $\bot$ otherwise. 
Queries may match records from other groups. 
Define $\mathcal J_b^+=\{p:M_{bp}=1\}$, $\mathcal J_b=\{p:M_{bp}\ne\bot\}$, and their column counterparts $\mathcal I_p^+$ and $\mathcal I_p$. With $s_{bp}=s(e_b,r_p)$,
\begin{align}
\label{eq:bidirectional}
L_{t\to v}&=-\frac{1}{|\mathcal B|}\sum_{b\in\mathcal B}
\log\frac{\sum_{p\in\mathcal J_b^+}\exp(s_{bp})}{\sum_{p\in\mathcal J_b}\exp(s_{bp})}, &
L_{v\to t}&=-\frac{1}{|\mathcal P|}\sum_{p\in\mathcal P}
\log\frac{\sum_{b\in\mathcal I_p^+}\exp(s_{bp})}{\sum_{b\in\mathcal I_p}\exp(s_{bp})}.
\end{align}
Each row and column has a positive by construction. Confirmed positives enter both numerator and denominator; unconfirmed pairs enter neither. We optimize
\begin{equation}
    L=\lambda_gL_{\mathrm{group}}+\lambda_{\mathrm{bi}}L_{\mathrm{bi}},
\end{equation}
where $L_{\mathrm{bi}}=(L_{t\to v}+L_{v\to t})/2$, with nonnegative weights $\lambda_g$ and $\lambda_{\mathrm{bi}}$.

The following lemma connects the joint objective to ranking-error control. 
Each comparison maximizes probability mass on its full positive set. 
\begin{lemma}[Task-aligned multi-positive ranking control]
\label{lem:dualoss}
For a local, row, or column comparison $A$ with finite scores and correct relation labels, let $A^+,A^-$ be its positive and negative sets, $m_A=|A^+|\geq1$, and $Z_A^{\pm}=\sum_{j\in A^{\pm}}e^{s_j}$. Let $E_A$ indicate that a negative scores at least as high as every positive, counting ties as errors and setting $E_A=0$ when $A^-$ is empty. Then
\begin{equation}
\label{eq:ranking-control}
E_A\leq\frac{\mathcal L_A}{\log(1+1/m_A)},\qquad
\mathcal L_A=\log(1+Z_A^-/Z_A^+).
\end{equation}
For families $x\in\{g,t,v\}$ of local, row, and column comparisons, let $E_x$ be their mean errors and $\omega_x=\log(1+1/m_x)$, where $m_x$ is the maximum positive count. Empty families contribute zero. The joint objective satisfies
\begin{equation}
\label{eq:joint-ranking-control}
\lambda_g \omega_gE_g+\frac{\lambda_{\mathrm{bi}}}{2}(\omega_tE_t+\omega_vE_v)\leq L.
\end{equation}
\end{lemma}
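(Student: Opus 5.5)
The proof has two parts: a pointwise bound for a single comparison, followed by averaging over families and summing with the loss weights.

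\textbf{Single comparison.} Fix a comparison $A$. If $A^-$ is empty, then $E_A=0$, and since $\mathcal L_A\ge 0$ trivially (because $Z_A^-\ge 0$), there is nothing to prove. Otherwise $\mathcal L_A>0$ because the scores are finite. If $E_A=0$ the bound again holds, so assume $E_A=1$. Then some negative $j^-$ satisfies $s_{j^-}\ge s_j$ for every $j\in A^+$. This gives
\[
Z_A^-\ge e^{s_{j^-}}\ge \max_{j\in A^+}e^{s_j}\ge Z_A^+/m_A,
\]
since a sum of $m_A$ terms is at most $m_A$ times its largest term. Hence $Z_A^-/Z_A^+\ge 1/m_A$, and because $\log(1+\cdot)$ is increasing, $\mathcal L_A\ge\log(1+1/m_A)$. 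Dividing yields $E_A\le \mathcal L_A/\log(1+1/m_A)$. The hypothesis of correct relation labels is used here: it ensures that $A^\pm$ are exactly the sets appearing in the numerator and denominator of the corresponding loss term.

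\textbf{Matching the losses.} Next, identify $\mathcal L_A$ with each summand of $L$. Take a group term in \eqref{eq:lgroup}. With $A^+=\mathcal P_b$ and $A^-=\mathcal N_b$, the summand is
\[
-\log\frac{Z^+}{Z^++Z^-}=\log\Bigl(1+\frac{Z^-}{Z^+}\Bigr)=\mathcal L_A .
\]
The row terms of $L_{t\to v}$ work the same way with $A^+=\mathcal J_b^+$ and $A^-=\mathcal J_b\setminus\mathcal J_b^+$, and the column terms of $L_{v\to t}$ with $\mathcal I_p^+$ and $\mathcal I_p\setminus\mathcal I_p^+$. This step needs care with disjointness: confirmed positives sit in both numerator and denominator, while masked pairs are absent from both, so the denominator is exactly $Z^++Z^-$. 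For the group family, the positive and negative sets must also be disjoint, which holds because the negatives are confirmed mismatches.

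\textbf{Averaging and combining.} Within a family $x$, every $m_A\le m_x$, so $\log(1+1/m_A)\ge\omega_x$. Therefore $\omega_xE_A\le\mathcal L_A$ for each comparison in the family. Averaging over the family gives $\omega_xE_x\le L_x$, where $L_x$ is the corresponding loss term.

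The group family needs one adjustment. Its loss averages only over $\mathcal B_N$, and groups without negatives have $E_A=0$. So $E_g$ should be taken as the mean over $\mathcal B_N$, which is the natural reading given that empty families contribute zero. If instead $E_g$ is defined as the mean over all of $\mathcal B$, it is at most the mean over $\mathcal B_N$, so the inequality still holds. The row and column families average over $\mathcal B$ and $\mathcal P$ respectively, matching \eqref{eq:bidirectional}.

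Finally, multiply by the nonnegative weights and use $L_{\mathrm{bi}}=(L_{t\to v}+L_{v\to t})/2$. This gives
\[
\lambda_g\omega_gE_g+\tfrac{\lambda_{\mathrm{bi}}}{2}\bigl(\omega_tE_t+\omega_vE_v\bigr)\le\lambda_gL_{\mathrm{group}}+\lambda_{\mathrm{bi}}L_{\mathrm{bi}}=L,
\]
which is \eqref{eq:joint-ranking-control}.

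\textbf{Main obstacle.} The analysis itself is elementary. The only delicate step is the bookkeeping: ensuring that the normalizations and index sets used for each family's mean error match those in the loss definitions, including how empty families and groups without negatives are handled.
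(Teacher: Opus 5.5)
Your proof is correct and follows the paper's argument essentially step for step: you bound $Z_A^-$ below by a single offending negative and $Z_A^+$ above by $m_A$ times its largest term, use $\omega_x\le\log(1+1/m_A)$ within each family, average with the local family indexed by $\mathcal B_N$ as the paper defines it, and weight by $\lambda_g$ and $\lambda_{\mathrm{bi}}/2$. One small correction: the correct-labels hypothesis is not what makes $A^\pm$ match the loss's numerator and denominator sets, since that holds for whatever mask labels are supplied given disjointness; it only matters for reading $E_A$ as a genuine semantic retrieval error.
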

Thus $L$ bounds weighted errors across the three comparison families, even with overlapping score pairs. 
Including all confirmed matches and excluding uncertain relations aligns comparisons with the intended retrieval labels. 
Appendix~\ref{app:proof2} gives the proof, gradient consequences, and scope.

\subsection{Text2Traj-Bench: Human-Verified Trajectory Relations}
\label{sec:text2traj}

Text2Traj-Bench provides human-verified query-trajectory relations for learning (Section~\ref{sec:mask}) and controlled object-centric retrieval evaluation.
Its training split supports within-video discrimination and bidirectional batch retrieval, while its held-out splits support three controlled evaluation settings: within-video text-to-trajectory retrieval, trajectory-to-text retrieval, and object-centric video retrieval.
Construction follows three stages within each data split: trajectory curation, verified description, and relation construction, as illustrated in Figure~\ref{fig:vtrajembed-framework}(c).
Models propose annotations; human review confirms relations.
Source recordings are assigned to training, development, or test splits before construction, with duplicate clips, related clips, and derived records kept together.

\textbf{Trajectory curation.}
It establishes and freezes the query-independent candidate pool $\mathcal R_{\mathrm{bench}}(V)$ before query generation.
A frozen SAM~3 tracker generates trajectory proposals from fixed category prompts.
Three independent annotators assess trajectory usability and proposed duplicate pairs; retention and duplicate consolidation require at least two supporting votes.
Only confirmed redundant trajectories are consolidated, while distinct objects, nonredundant fragments, and unresolved identity pairs remain separate candidates.
All queries for $V$ and all models share this pool in trajectory-level evaluation; description or query rejection leaves it unchanged.

\textbf{Verified description.}
For curated trajectories, evidence-grounded query construction separates visual fact verification from query generation.
A VLM proposes appearance and motion descriptions, structured facts, and evidence references from ordered trajectory observations, supplemented by target-marked full frames for contextual claims.
Three annotators independently check the descriptions and facts against the video; the final version, including any minor evidence-supported corrections, requires at least two approvals.
An LLM then generates query $e$ solely from human-approved facts $D_r^H$ that support its required predicates, temporal scope, and relational qualifiers.

\textbf{Relation construction.}
It first determines query admission through human review of query faithfulness, naturalness, and every within-video query-trajectory pair against visual evidence, optionally using LLM pre-labels. 
Unresolved relations may temporarily receive $\bot$ during construction. 
A faithful, natural query with intended target trajectory $r^*$ is admitted only when
\begin{equation}
\label{eq:unique}
\bar Y(e,r^*)=1,\qquad
\bar Y(e,r)=0
\quad\forall r\in\mathcal R_{\mathrm{bench}}(V)\setminus\{r^*\}.
\end{equation}
Thus within-video relations are binary across all three splits.
Additional confirmed matches or uncertain relations block admission without changing the candidate pool.
Accepted training relations define $\mathcal P_b$ and $\mathcal N_b$ for $L_{\mathrm{group}}$, whereas held-out relations provide human-verified evaluation labels.

Following within-video verification, cross-video expansion constructs within-split relations for bidirectional training and cross-video evaluation. 
Verified descriptions guide mining of easy- and hard-negative candidates and potential matches; these categories guide sampling, not relation labels. 
An LLM compares query requirements with verified candidate facts and proposes a relation with supporting or conflicting predicates.
For training, humans individually verify potential matches and hard-negative candidates and review easy-negative candidates in stratified samples; unverified pairs remain $\bot$.
These cross-video and binary within-video relations populate training mask $M_{bp}$; $L_{\mathrm{bi}}$ excludes $\bot$ entries.
Optional calibrated automatic easy negatives are training-only and reported separately as pseudo-labels.
For development and test, every cross-video relation required by trajectory-to-text and object-centric video retrieval galleries is individually confirmed as a match or mismatch.

Fixed candidate pools and human-verified relevance labels define the benchmark's controlled evaluation protocol. 
Verified within-video relations label text-to-trajectory retrieval; verified cross-video relations define query-trajectory and query-video relevance for trajectory-to-text and object-centric video retrieval. 
All relations for development and test evaluation have binary labels; only training supervision retains unresolved pairs.
For trajectory-level evaluation, fixed pools isolate representation and ranking from proposal quality; associated verified relations define relevance for object-centric video retrieval.
Appendix~\ref{app:benchmark} details annotation, batching, auditing, and release protocols.

\section{Evaluation}
\label{sec:experiments}

\subsection{Experimental Setup}
\label{sec:experimental-setup}

\textbf{Implementation.}
VTrajEmbed trains only lightweight trajectory modules and LoRA adapters, keeping pretrained components frozen.
It uses the SigLIP2~\cite{tschannen2025siglip} vision encoder from Qwen3-VL-4B-Instruct, the ObjEmbed~\cite{fu2026objembed} object projector, and the language component of Qwen3-VL-4B~\cite{bai2025qwen3} as the shared trajectory-text backbone.
Training uses the joint objective in Section~\ref{sec:mask} for three epochs, with $\lambda_g=0.1$, $\lambda_{\mathrm{bi}}=1.0$, an initial learning rate of $5\times10^{-5}$, and an effective batch size of $32$.
Ablations share this configuration unless stated otherwise.
Architecture, optimization, and runtime details appear in Appendices~\ref{app:encoding} and~\ref{app:experimental-details}.

\textbf{Data and evaluation tasks.}
Text2Traj-Bench provides three retrieval settings using videos from HC-STVG~\cite{tang2021human}, Kinetics-700~\cite{carreira2019short}, MammAlps~\cite{gabeff2025mammalps} and other datasets.
Source recordings are split before construction; development data support model selection, and test data are reserved for final evaluation.
External-evaluation recordings from overlapping datasets are excluded from Text2Traj-Bench training and cross-video relation expansion.
Benchmark construction details appear in Appendix~\ref{app:benchmark}.
\emph{Text-to-Trajectory Retrieval (T2Traj)} ranks a video's fixed pool $\mathcal R_{\mathrm{bench}}(V)$ for a query;
\emph{Trajectory-to-Text Retrieval (Traj2T)} ranks all test queries for a trajectory;
and \emph{Object-Centric Video Retrieval (OCVR)} ranks videos for an object-centric query.

\textbf{Baselines.}
Baselines cover four multimodal embedding families: jina-embeddings-v4 (Jina-v4)~\cite{jinaembeddingsv4}, BGE-VL-v1.5-mmeb (BGE-VL)~\cite{zhou2024megapairs}, WeMM-Embedding (WeMM)~\cite{wemm-embedding}, and Qwen3-VL-Embedding (Qwen3E)~\cite{li2026qwen3}.
Scale suffixes identify model sizes, e.g., WeMM-4B and Qwen3E-8B.

\textbf{Evaluation protocol.}
Evaluation uses the same query-independent benchmark trajectories across models for T2Traj and Traj2T, with human-verified relevance labels for all three settings.
Baselines without native object-level inputs independently encode each trajectory as a video with its boxes overlaid on the corresponding frames.
For OCVR, conventional baselines use native whole-video embeddings, while VTrajEmbed scores videos by $S(e,V)=\max_{r\in\mathcal R_{\mathrm{bench}}(V)}s(e,r)$.
Relation labels and construction-time evidence annotations are withheld from model inputs.
All development and test gallery relations have confirmed binary relevance labels; no unverified pairs are scored.
All three settings report R@1 (primary), R@3, and mAP.
Global retrieval evaluates whether trajectory-derived representations remain useful after benchmark-specific alignment.
For each benchmark, VTrajEmbed uses multiple trajectory representations $v(r)$ per video for video-text alignment on the training split, followed by retrieval evaluation on the test split.

\begin{table}[!t]
\centering
\caption{\textbf{Object-centric retrieval results on Text2Traj-Bench.}
R@1, R@3, and mAP are reported in percent (higher is better).
Task definitions, model interfaces, and abbreviations follow Section~\ref{sec:experimental-setup}.
\textbf{Bold} marks column bests; parentheses show percentage-point changes from Qwen3E-8B.}
\label{tab:main_results}
\begingroup
\small
\setlength{\tabcolsep}{4pt}
\begin{adjustbox}{max width=\linewidth}
\begin{tabular}{lccc ccc ccc}
\toprule
\multirow{2}{*}{Model}
& \multicolumn{3}{c}{T2Traj}
& \multicolumn{3}{c}{Traj2T}
& \multicolumn{3}{c}{OCVR} \\
\cmidrule(lr){2-4} \cmidrule(lr){5-7} \cmidrule(lr){8-10}
& R@1 & R@3 & mAP
& R@1 & R@3 & mAP
& R@1 & R@3 & mAP \\
\midrule
Jina-v4~\cite{jinaembeddingsv4}
& 26.31 & 77.72 & 53.52
& 43.89 & 67.68 & 59.62
& 47.91 & 73.39 & 63.44 \\

BGE-VL~\cite{zhou2024megapairs}
& 60.66 & 92.73 & 76.79
& 39.56 & 57.31 & 53.93
& 60.85 & 84.08 & 73.88 \\

WeMM-2B~\cite{wemm-embedding}
& 45.10 & 86.33 & 66.31
& 20.98 & 40.43 & 37.14
& 42.60 & 60.61 & 56.10 \\

WeMM-4B~\cite{wemm-embedding}
& 46.61 & 87.20 & 67.59
& 21.46 & 38.83 & 37.20
& 44.13 & 62.38 & 57.41 \\

WeMM-9B~\cite{wemm-embedding}
& 46.76 & 86.81 & 67.56
& 18.09 & 33.92 & 33.52
& 45.26 & 62.86 & 58.36 \\

Qwen3E-2B~\cite{li2026qwen3}
& 44.79 & 86.56 & 66.01
& 43.66 & 57.54 & 51.53
& 65.86 & 84.58 & 76.35 \\

\rowcolor{gray!12}
Qwen3E-8B~\cite{li2026qwen3}
& 54.19 & 89.57 & 72.24
& 48.29 & 61.54 & 55.21
& \textbf{69.65} & 87.32 & 79.28 \\
\midrule

\rowcolor{blue!8}
VTrajEmbed-5B
& \textbf{69.67} & \textbf{98.51} & \textbf{83.72}
& \textbf{66.34} & \textbf{94.65} & \textbf{80.27}
& 68.04 & \textbf{95.21} & \textbf{81.43} \\

\rowcolor{blue!8}
&
{\footnotesize(15.48$\uparrow$)}
& {\footnotesize(8.94$\uparrow$)}
& {\footnotesize(11.48$\uparrow$)}
& {\footnotesize(18.05$\uparrow$)}
& {\footnotesize(33.11$\uparrow$)}
& {\footnotesize(25.06$\uparrow$)}
& {\footnotesize(1.61$\downarrow$)}
& {\footnotesize(7.89$\uparrow$)}
& {\footnotesize(2.15$\uparrow$)} \\
\bottomrule
\end{tabular}
\end{adjustbox}
\par
\endgroup

\vspace{0.8em}

\caption{\textbf{Global video retrieval results.}
R@1 is reported in percent (higher is better); Avg.\ is the unweighted mean over five datasets.
Evaluation settings and abbreviations follow Section~\ref{sec:experimental-setup}.
\textbf{Bold} marks column bests; parentheses show percentage-point changes from Qwen3E-8B.}
\label{tab:retrieval_results}
\begingroup
\small
\setlength{\tabcolsep}{4pt}
\begin{adjustbox}{max width=\linewidth}
\begin{tabular}{lcccccc}
\toprule
Method & Breakfast & HMDB51 & UCF101 & MSVD & MSR-VTT & Avg. R@1 \\
\midrule
WeMM-2B~\cite{wemm-embedding}
& 23.60
& 75.52
& 95.59
& 55.92
& 47.56
& 59.64 \\

WeMM-4B~\cite{wemm-embedding}
& 28.14
& \textbf{77.07}
& \textbf{96.01}
& 56.26
& \textbf{47.87}
& \textbf{61.07} \\

WeMM-9B~\cite{wemm-embedding}
& 27.99
& 76.68
& 95.49
& \textbf{56.52}
& 47.08
& 60.75 \\

Qwen3E-2B~\cite{li2026qwen3}
& 22.86
& 73.02
& 90.88
& 3.09
& 10.86
& 40.14 \\

\rowcolor{gray!12}
Qwen3E-8B~\cite{li2026qwen3}
& 22.85
& 76.26
& 94.97
& 5.89
& 14.33
& 42.86 \\
\midrule
\rowcolor{blue!8}
VTrajEmbed-5B
& \textbf{44.84}
& 75.12
& 94.55
& 19.64
& 13.26
& 49.48 \\
\rowcolor{blue!8}
\multicolumn{1}{c}{}
& \multicolumn{1}{c}{\footnotesize(21.99$\uparrow$)}
& \multicolumn{1}{c}{\footnotesize(1.14$\downarrow$)}
& \multicolumn{1}{c}{\footnotesize(0.42$\downarrow$)}
& \multicolumn{1}{c}{\footnotesize(13.75$\uparrow$)}
& \multicolumn{1}{c}{\footnotesize(1.07$\downarrow$)}
& \multicolumn{1}{c}{\footnotesize(6.62$\uparrow$)} \\
\bottomrule
\end{tabular}
\end{adjustbox}
\par
\endgroup
\end{table}

\subsection{Overall Performance}
\label{sec:overall-performance}

Table~\ref{tab:main_results} compares VTrajEmbed with existing multimodal embedding models on the three object-centric retrieval settings of Text2Traj-Bench.
Table~\ref{tab:retrieval_results} further evaluates whether the trajectory representations learned by VTrajEmbed can be effectively adapted to standard global video-text retrieval.
Overall, VTrajEmbed substantially improves fine-grained object-centric retrieval while its trajectory representations remain effective for broader global video-text retrieval after adaptation.

Compared with Qwen3E-8B, VTrajEmbed-5B improves T2Traj R@1 from 54.19\% to 69.67\% and Traj2T R@1 from 48.29\% to 66.34\%, corresponding to gains of 15.48 and 18.05 percentage points, respectively.
The improvement is even more pronounced under broader ranking metrics: Traj2T R@3 increases from 61.54\% to 94.65\%, and mAP from 55.21\% to 80.27\%.
After alignment on the training split of each global retrieval benchmark, VTrajEmbed achieves an average test-set R@1 of 49.48\%, compared with 42.86\% for Qwen3E-8B, with similar performance on HMDB51, UCF101, and MSR-VTT and substantially higher results on Breakfast and MSVD.
Taken together, these results show that VTrajEmbed substantially strengthens trajectory-level discrimination while its learned trajectory representations remain effectively adaptable to broader video-text retrieval.

\textbf{Trajectory-level modeling substantially improves fine-grained object discrimination.}
On T2Traj, VTrajEmbed achieves 69.67\% R@1, 98.51\% R@3, and 83.72\% mAP.
Compared with Qwen3E-8B, these results correspond to improvements of 15.48, 8.94, and 11.48 percentage points, respectively.
VTrajEmbed also outperforms BGE-VL, the strongest competing baseline on T2Traj, by 9.01 points in R@1, 5.78 points in R@3, and 6.93 points in mAP.
The gain is particularly meaningful because all methods rank exactly the same query-independent trajectory pools; therefore, it cannot be attributed to improved proposal recall or access to different candidate objects.
Instead, the results directly reflect a stronger ability to distinguish the target trajectory from visually or semantically competing objects within the same video.
This behavior is consistent with the Slow-Fast trajectory encoder: the Fast branch preserves ordered object appearance, geometry, and motion evidence, whereas the Slow branch supplements these observations with sparse high-resolution, object-conditioned scene context.

\textbf{Bidirectional supervision produces larger gains when retrieving text from trajectories.}
The advantage of VTrajEmbed becomes even more pronounced on Traj2T, where it reaches 66.34\% R@1, 94.65\% R@3, and 80.27\% mAP.
Relative to Qwen3E-8B, the improvements are 18.05, 33.11, and 25.06 percentage points, respectively.
The gain is therefore not restricted to identifying a target object from a small within-video candidate set: a trajectory representation also retrieves its corresponding descriptions more reliably from the substantially broader test-text gallery.
This observation provides empirical support for the bidirectional component of our training objective.
In addition to the within-video group loss that explicitly trains object discrimination, $L_{\mathrm{bi}}$ performs both text-to-trajectory and trajectory-to-text comparisons over verified cross-group relations.
The particularly large improvements in Traj2T R@3 and mAP are consistent with this design, suggesting that the learned trajectory embedding captures semantics that remain discriminative beyond its source-video candidate pool.

\textbf{Object-level alignment remains effective when aggregated to video-level retrieval.}
In OCVR, conventional embedding baselines rank videos using their native whole-video embeddings, whereas VTrajEmbed derives each video score from its highest text-trajectory similarity.
Under this trajectory-centric scoring interface, VTrajEmbed obtains 68.04\% R@1, 95.21\% R@3, and 81.43\% mAP.
Compared with Qwen3E-8B, R@3 and mAP increase by 7.89 and 2.15 percentage points, respectively, although R@1 decreases slightly by 1.61 points.
The combination of substantially higher R@3 and mAP with a small R@1 decrease suggests that trajectory-level matching more reliably places relevant videos among the highest-ranked candidates, while the ordering of the first-ranked video remains less stable.
This result is also consistent with the OCVR formulation $S(e,V)=\max_{r\in\mathcal R_{\mathrm{bench}}(V)}s(e,r)$: trajectory-level correspondence can support cross-video retrieval without introducing a separately trained video-level scoring module.
At the same time, the remaining R@1 gap indicates that aggregating independently scored object trajectories by a maximum operation does not fully resolve video-level ranking.

\textbf{Trajectory representations transfer effectively to global video retrieval.}
We further evaluate whether VTrajEmbed's learned trajectory representations support conventional global video-text retrieval after task-specific alignment.
For each benchmark, multiple trajectory representations \(v(r)\) from each video are used for video-text alignment on the training split, with retrieval evaluated on the test split.
After benchmark-specific adaptation, VTrajEmbed achieves an average R@1 of 49.48\%, compared with 42.86\% for Qwen3E-8B, despite using a smaller 4B language backbone.
Performance varies across datasets:
VTrajEmbed substantially exceeds Qwen3E-8B on Breakfast (44.84\% vs.\ 22.85\%, $+21.99$ points) and MSVD (19.64\% vs.\ 5.89\%, $+13.75$ points), while remaining close on HMDB51 (75.12\% vs.\ 76.26\%), UCF101 (94.55\% vs.\ 94.97\%), and MSR-VTT (13.26\% vs.\ 14.33\%).
The latter three results trail Qwen3E-8B by only 0.42--1.14 percentage points.
These results indicate that trajectory-level semantics remain useful beyond object-centric retrieval and can be effectively adapted to global video-text alignment.
Together with the substantially stronger T2Traj and Traj2T results, these findings suggest that VTrajEmbed improves fine-grained object discrimination without restricting its representations to trajectory-only retrieval.

\subsection{Ablation Study}
\label{sec:ablation}

Table~\ref{tab:ablation} examines the contributions of the Slow-Fast trajectory encoder and the two training objectives on T2Traj.
The full model performs best across all three metrics.
Removing either architectural branch or either loss term consistently degrades performance, indicating that both the trajectory representation and the joint supervision contribute to object-level retrieval.

\textbf{Dense Fast observations provide core trajectory evidence, complemented by sparse Slow context.}
Removing the Slow branch decreases R@1 from 69.67\% to 64.58\% and mAP from 83.72\% to 79.86\%.
With the Fast sequence unchanged, this degradation supports the contribution of sparse high-resolution, object-conditioned scene context beyond dense low-resolution trajectory observations.
In contrast, removing the Fast branch reduces R@1 to 41.08\%, R@3 to 87.01\%, and mAP to 64.94\%.
The substantially larger drop is consistent with the Fast branch preserving ordered object appearance, geometry, and temporal progression.

Denser Fast sampling further improves trajectory discrimination.
Reducing the maximum number of Fast observations from 32 to 16 lowers R@1 to 64.31\% and mAP to 79.81\%.
The smaller degradation relative to complete branch removal suggests that reduced sampling retains much of the Fast pathway's benefit but weakens fine-grained discrimination.

\textbf{Joint supervision improves T2Traj ranking over either objective alone.}
Removing $L_{\mathrm{bi}}$, leaving only the within-video group objective, reduces R@1 from 69.67\% to 49.29\% and mAP from 83.72\% to 71.10\%.
Conversely, removing $L_{\mathrm{group}}$ and training only with the bidirectional objective reduces R@1 to 48.06\% and mAP to 66.27\%.
The group-only variant retains higher R@3 than the bidirectional-only variant (93.55\% versus 81.45\%), consistent with $L_{\mathrm{group}}$ directly training within-video candidate discrimination.
However, its lower R@1 and mAP than the full model support combining local discrimination with relation-masked bidirectional retrieval.

\begin{table}[t]
\centering
\caption{\textbf{Ablation study of Slow-Fast encoding and training objectives.}
Other settings follow Section~\ref{sec:experimental-setup}.
Scores are percentages; parentheses show percentage-point drops from the full model.}
\label{tab:ablation}
\small
\setlength{\tabcolsep}{4pt}
\begin{tabular}{lccc}
\toprule
Variant & R@1 & R@3 & mAP \\
\midrule
\rowcolor{blue!8}
Full model
& \textbf{69.67}
& \textbf{98.51}
& \textbf{83.72} \\

w/o Slow branch
& 64.58{\small(5.09$\downarrow$)}
& 95.49{\small(3.02$\downarrow$)}
& 79.86{\small(3.86$\downarrow$)} \\

w/o Fast branch
& 41.08{\small(28.59$\downarrow$)}
& 87.01{\small(11.50$\downarrow$)}
& 64.94{\small(18.78$\downarrow$)} \\

Fast observations: 32 $\rightarrow$ 16
& 64.31{\small(5.36$\downarrow$)}
& 95.05{\small(3.46$\downarrow$)}
& 79.81{\small(3.91$\downarrow$)} \\

w/o $L_{\mathrm{bi}}$ (group only)
& 49.29{\small(20.38$\downarrow$)}
& 93.55{\small(4.96$\downarrow$)}
& 71.10{\small(12.62$\downarrow$)} \\

w/o $L_{\mathrm{group}}$ (bidirectional only)
& 48.06{\small(21.61$\downarrow$)}
& 81.45{\small(17.06$\downarrow$)}
& 66.27{\small(17.45$\downarrow$)} \\
\bottomrule
\end{tabular}
\end{table}

\section{Conclusion and Future Work}
\label{sec:conclusion}

We present VTrajEmbed, an embedding model that aligns text and video object trajectories in a shared semantic space.
Its Slow-Fast encoder combines dense low-resolution observations for appearance and motion with sparse high-resolution frames for scene context, while mask-guided contrastive learning couples within-video discrimination with bidirectional trajectory-text retrieval under verified relation supervision.
Text2Traj-Bench provides human-verified query-trajectory relations and fixed candidate pools for controlled trajectory-level evaluation, alongside verified cross-video relations for object-centric video retrieval.
VTrajEmbed-5B improves T2Traj and Traj2T R@1 over Qwen3E-8B by 15.48 and 18.05 percentage points, respectively.
After benchmark-specific adaptation, its trajectory-derived representations achieve 49.48\% average R@1 across five global retrieval datasets, compared with 42.86\% for Qwen3E-8B.
These results indicate finer object discrimination while preserving transferability to broader video-text retrieval.

Future work will extend fixed-pool evaluation to the full query-conditioned proposal-and-ranking pipeline, jointly measuring proposal recall, trajectory quality, and retrieval accuracy.
This will quantify how missing objects, fragmented trajectories, and identity switches affect downstream retrieval.
We will investigate learned aggregation of trajectory-level similarities to replace maximum-similarity scoring for more robust object-centric video retrieval.
Finally, we will expand Text2Traj-Bench with more object categories, video domains, naturally collected user queries, and verified cross-video relations to assess robustness and generalization across bidirectional and video-level retrieval.

\appendix
\section{Notation}
\label{app:notation}
Table~\ref{tab:notation} summarizes the mathematical notation used in the main text, including the lemmas and experimental captions. Related symbols are grouped by role; meanings that depend on context are stated explicitly.

\begingroup
\normalsize
\renewcommand{\arraystretch}{1.08}
\setlength{\tabcolsep}{6pt}
\setlength{\LTpre}{6pt}
\setlength{\LTpost}{6pt}
\begin{longtable}{@{}p{0.26\linewidth}p{\dimexpr0.74\linewidth-2\tabcolsep\relax}@{}}
\caption{Notation used in the main text.}\label{tab:notation}\\
\toprule
Symbol & Meaning \\
\midrule
\endfirsthead
\multicolumn{2}{@{}l}{Table~\thetable{} (continued)}\\
\toprule
Symbol & Meaning \\
\midrule
\endhead
\midrule
\multicolumn{2}{r@{}}{Continued on the next page}\\
\endfoot
\bottomrule
\endlastfoot

\multicolumn{2}{@{}l}{\textbf{Task and candidate ranking}}\\*
$V$, $e$ & \raggedright Input video and natural-language query, respectively.\tabularnewline
$r$, $\mathcal R(V)$ & \raggedright A candidate object trajectory and the fixed candidate pool for video $V$. Each trajectory stores boxes and original timestamps.\tabularnewline
$\hat r$, $r^*$ & \raggedright Highest-scoring retrieved trajectory and designated benchmark target, respectively.\tabularnewline
$s(e,r)$ & \raggedright Retrieval logit: $\exp(\ell)q(e)^\top v(r)$.\tabularnewline
$M$ & \raggedright Number of candidates in an evaluation pool; uniform random selection succeeds with probability $1/M$.\tabularnewline
$Y(e,r)$, $\widehat Y(e,r)$ & \raggedright True binary semantic-match relation and the verifier's estimate, respectively; the estimate can also be uncertain ($\bot$).\tabularnewline

\midrule
\multicolumn{2}{@{}l}{\textbf{Slow-Fast trajectory encoding (Section~\ref{sec:slowfast})}}\\*
$t$ & \raggedright Original observation time or frame timestamp; the subscript $t$ in a feature identifies that observation.\tabularnewline
$i$, $j$ & \raggedright Indices whose domains follow the surrounding expression: $i$ indexes Fast blocks or preselected frames, $j$ indexes retained frames in Lemma~\ref{lem:dualres}, and $j$ indexes comparison items in Lemma~\ref{lem:dualoss}.\tabularnewline
$n_f$, $n_s$, $m_s$ & \raggedright Numbers of valid Fast observations, preselected Slow frames, and retained Slow frames, respectively; $m_s\leq n_s\leq n_f$.\tabularnewline
$P_f$, $P_s$ & \raggedright Per-frame pixel budgets for the Fast and Slow branches; $P_s>P_f$.\tabularnewline
$a_t$, $g_t$ & \raggedright Object appearance feature extracted from the frozen target-ROI projector and its additional geometry/motion input, respectively. The symbol $a_t$ follows the main text and denotes the ROI appearance evidence used before the explicit geometry update, while $g_t$ contains normalized box geometry, displacement, relative scale change, and timestamp information.\tabularnewline
$\widetilde a_t$ & \raggedright Geometry-enhanced object feature: $a_t+\operatorname{MLP}_{\mathrm{geo}}(g_t)$.\tabularnewline
$B_i$, $K_r$ & \raggedright The $i$th consecutive block of Fast observations and the number of valid blocks for trajectory $r$, respectively.\tabularnewline
$m$, $K_{\max}$ &
\raggedright Fast temporal-block length and the fixed maximum number of Fast trajectory slots, respectively. The experimental configuration uses $m=4$ and $K_{\max}=8$.\tabularnewline
$f_i$ & \raggedright Fast feature obtained by attention-pooling the geometry-enhanced observations in block $B_i$.\tabularnewline
$S_0$, $S$ & \raggedright Preselected Slow frame set and the subset retained after semantic pruning, respectively.\tabularnewline
$\tau$ & \raggedright Cosine-similarity threshold for removing a preselected frame that is sufficiently similar to a retained representative.\tabularnewline
$X_t^s$ & \raggedright High-resolution full-frame visual token bank at retained time $t$; superscript $s$ denotes the Slow branch.\tabularnewline
$u_t$, $c$ & \raggedright Object-conditioned scene-context feature at time $t$ and the pooled context across retained Slow frames, respectively.\tabularnewline
$W_q$, $W_s$ & \raggedright Learned projections of the object appearance into the Slow attention query, and of pooled Slow context into the Fast residual, respectively.\tabularnewline
$\alpha$, $z_i$ & \raggedright Learned scalar context gate and the fused block feature $z_i=\operatorname{LN}(f_i+\alpha W_s c)$.\tabularnewline
$h_v$, $h_e$ & \raggedright Visual and text readout hidden states from their separate passes through the shared language backbone.\tabularnewline
$W_v$, $W_t$ & \raggedright Output projections mapping visual and text readouts into the shared embedding space, respectively. Here the subscript $t$ denotes text.\tabularnewline
$v(r)$, $q(e)$ & \raggedright Unit-normalized trajectory and text embeddings, respectively.\tabularnewline
$\ell$ & \raggedright Learned logit-scale parameter; $\exp(\ell)$ is the positive scale multiplying cosine similarity.\tabularnewline

\midrule
\multicolumn{2}{@{}l}{\textbf{Slow-context preservation (Lemma~\ref{lem:dualres})}}\\*
$\pi(i)$, $\bar g_i$ & \raggedright Retained representative assigned to preselected frame $i$, and the unit semantic image embedding of frame $i$, respectively. Thus $\bar g_{\pi(i)}$ is the representative's embedding; these embeddings differ from geometry inputs $g_t$.\tabularnewline
$u_i$, $u_{\pi(i)}$ & \raggedright Object-conditioned context features of preselected frame $i$ and its retained representative, using the same feature definition as $u_t$.\tabularnewline
$L_u$, $\eta$ & \raggedright Nonnegative coefficient and residual in the assumed bound relating semantic-image distance to context-feature distance.\tabularnewline
$R$, $K_R$ & \raggedright Slow attention resampler ($R=\operatorname{Resample}_s$) and its Lipschitz factor under maximum per-frame perturbations at fixed input length.\tabularnewline
$c_0$, $c_D$ & \raggedright Context pooled from all preselected frames before pruning, and the dense-context reference, respectively. The dense reference encodes all Fast observations at high resolution.\tabularnewline
$v_D(r)$ & \raggedright Normalized trajectory embedding obtained with $c_D$, using the same model parameters and Fast sequence as $v(r)$.\tabularnewline
$\varepsilon_{\mathrm{pre}}$ & \raggedright Upper bound on the preselection error $\|c_0-c_D\|_2$.\tabularnewline
$\rho_\tau$ & \raggedright Per-frame context-replacement bound $L_u\sqrt{2(1-\tau)}+\eta$.\tabularnewline
$\varepsilon_{\mathrm{mult}}$ & \raggedright Difference between pooling representatives repeated once per assigned preselected frame and pooling each retained frame once; see Eq.~\eqref{eq:multiplicity-error}. It accounts for the change in attention multiplicities.\tabularnewline
$K_c$ & \raggedright Lipschitz factor of the normalized trajectory encoder with respect to pooled context, holding the Fast sequence fixed.\tabularnewline
$\delta_r$ & \raggedright Upper bound on trajectory-embedding distortion: $K_c(\varepsilon_{\mathrm{pre}}+K_R\rho_\tau+\varepsilon_{\mathrm{mult}})$.\tabularnewline
$\gamma$ & \raggedright Dense-reference cosine margin of the correct target over its highest-scoring distractor. The lemma preserves this decision when $\gamma>2\max_r\delta_r$.\tabularnewline

\midrule
\multicolumn{2}{@{}l}{\textbf{Training groups, relations, and losses (Section~\ref{sec:mask})}}\\*
$b$, $e_b$, $\mathcal B$ & \raggedright Query-group index, the group's query, and the set of query-group indices in a training batch, respectively.\tabularnewline
$\mathcal P_b$, $\mathcal N_b$ & \raggedright Positive trajectories and confirmed within-video negatives for query group $b$.\tabularnewline
$\mathcal B_N$ & \raggedright Subset of batch groups containing at least one confirmed negative.\tabularnewline
$s_{br}$, $L_{\mathrm{group}}$ & \raggedright Within-group logit $s(e_b,r)$ and the multi-positive loss averaged over $\mathcal B_N$; the loss is zero if this set is empty.\tabularnewline
$\mathcal P$, $p$, $r_p$ & \raggedright Indexed list of all positive records in the batch, a record index, and the trajectory input of record $p$, respectively. Repeated records retain their multiplicities.\tabularnewline
$M_{bp}$, $\bot$ & \raggedright Query-record relation mask: $1$ for a confirmed match, $0$ for a confirmed mismatch, and $\bot$ for an unconfirmed relation excluded from the loss.\tabularnewline
$\mathcal J_b^+$, $\mathcal J_b$ & \raggedright Positive record indices $\{p:M_{bp}=1\}$ and all confirmed record indices $\{p:M_{bp}\ne\bot\}$ for query $b$.\tabularnewline
$\mathcal I_p^+$, $\mathcal I_p$ & \raggedright Positive query indices $\{b:M_{bp}=1\}$ and all confirmed query indices $\{b:M_{bp}\ne\bot\}$ for record $p$.\tabularnewline
$s_{bp}$ & \raggedright Batch logit $s(e_b,r_p)$ between query $b$ and trajectory record $p$.\tabularnewline
$L_{t\to v}$, $L_{v\to t}$ & \raggedright Masked multi-positive batch losses for text-to-trajectory and trajectory-to-text retrieval, respectively.\tabularnewline
$L_{\mathrm{bi}}$, $L$ & \raggedright Mean of the two batch-retrieval losses and the joint objective $L=\lambda_gL_{\mathrm{group}}+\lambda_{\mathrm{bi}}L_{\mathrm{bi}}$, respectively.\tabularnewline
$\lambda_g$, $\lambda_{\mathrm{bi}}$ & \raggedright Nonnegative weights of the within-video and bidirectional batch losses.\tabularnewline

\midrule
\multicolumn{2}{@{}l}{\textbf{Multi-positive ranking control (Lemma~\ref{lem:dualoss})}}\\*
$A$, $A^+$, $A^-$ & \raggedright One local, query-row, or trajectory-column comparison, together with its positive and negative item sets.\tabularnewline
$m_A$, $s_j$ & \raggedright Number of positive items in comparison $A$ and the finite logit of comparison item $j$, respectively.\tabularnewline
$Z_A^+$, $Z_A^-$ & \raggedright Sums of exponentiated logits over positive and negative items: $Z_A^{\pm}=\sum_{j\in A^{\pm}}\exp(s_j)$.\tabularnewline
$E_A$ & \raggedright Conservative top-1 error indicator: one when a negative scores at least as high as every positive, and zero otherwise (including an empty negative set). Ties count as errors.\tabularnewline
$\mathcal L_A$ & \raggedright Single-comparison multi-positive loss $\log(1+Z_A^-/Z_A^+)$.\tabularnewline
$x\in\{g,t,v\}$ & \raggedright Comparison-family label: $g$ denotes local within-video groups, $t$ query rows (text-to-trajectory), and $v$ trajectory columns (trajectory-to-text).\tabularnewline
$E_x$; $E_g,E_t,E_v$ & \raggedright Mean conservative ranking error for family $x$; the three explicit subscripts identify the corresponding local, row, and column errors.\tabularnewline
$m_x$ & \raggedright Maximum number of positives in a comparison from family $x$. Empty families contribute zero to the joint bound.\tabularnewline
$\omega_x$; $\omega_g,\omega_t,\omega_v$ & \raggedright Ranking-control coefficient $\log(1+1/m_x)$ for each nonempty comparison family.\tabularnewline

\midrule
\multicolumn{2}{@{}l}{\textbf{Experimental diagnostics and retrieval metrics}}\\*
$\overline m_s$ & \raggedright Mean number of retained Slow frames over the evaluated trajectories.\tabularnewline
$\overline\Delta_v$ & \raggedright Mean measured distortion $\|v(r)-v_D(r)\|_2$ over evaluated trajectories, distinct from the theoretical bound $\delta_r$.\tabularnewline
$\tau_-$, $\tau_\star$, $\tau_+$ & \raggedright Lower, reference, and higher pruning thresholds fixed on development data, with $\tau_-<\tau_\star<\tau_+$.\tabularnewline
R@1, R@3 & \raggedright Recall at ranks one and three. With one relevant trajectory per query, each is the fraction of queries whose target is within that rank cutoff.\tabularnewline
mAP & \raggedright Mean average precision over queries; the relevant-item sets must follow the stated retrieval protocol.\tabularnewline

\midrule
\multicolumn{2}{@{}l}{\textbf{Operators and notation conventions}}\\*
$\operatorname{MLP}_{\mathrm{geo}}$ & \raggedright Trainable multilayer perceptron mapping geometry/motion inputs to appearance-feature space.\tabularnewline
$\operatorname{Resample}_f$, $\operatorname{Resample}_s$ & \raggedright Attention resamplers for Fast temporal blocks and Slow context, respectively.\tabularnewline
$\operatorname{MHA}$, $\operatorname{LN}$ & \raggedright Multihead attention and layer normalization, respectively.\tabularnewline
$\operatorname{L2Norm}$ & \raggedright Normalization of a nonzero vector to unit Euclidean norm.\tabularnewline
$(\cdot)^\top$, $\|\cdot\|_2$, $|\cdot|$ & \raggedright Transpose, Euclidean norm, and set cardinality or indexed-list length, respectively. A list length counts repeated records.\tabularnewline
$[\cdot]$, $\{\cdot\}$ & \raggedright An indexed sequence/list and a set, respectively; repeated representatives in a list retain their multiplicities.\tabularnewline
$\exp$, $\log$, $\arg\max$ & \raggedright Natural exponential, natural logarithm, and an argument attaining the maximum. The exponential $e^{s_j}$ in Lemma~\ref{lem:dualoss} is unrelated to query $e$.\tabularnewline
\end{longtable}
\endgroup

\section{Experimental and Reproducibility Details}
\label{app:experimental-details}
This section consolidates the concrete model, optimization, batching, and runtime configurations used for the reported experiments.
Detailed encoder operations and training-data construction are described separately in Appendices~\ref{app:encoding} and~\ref{app:training}.

\subsection{Model and Encoder Configuration}
\label{app:model-config}
The frozen visual encoder is initialized from the vision tower of \texttt{Qwen3-VL-4B-Instruct}.
This vision tower is based on SigLIP2-Large and has undergone dynamic-resolution adaptation and multimodal continued training.
We extract intermediate spatial features from Transformer layers 12, 18, and 24, using one-based layer indices.
The frozen target-ROI projection module is initialized from \texttt{ObjEmbed-4B}.
The language backbone is initialized from the language component of \texttt{Qwen3-VL-4B-Instruct}.
The pretrained visual encoder, target-ROI projector, and base language-model parameters remain frozen during VTrajEmbed training.
For the Fast pathway, we sample at most $n_f^{\max}=32$ valid trajectory observations and use a temporal block length of $m=4$.
A trajectory therefore contains at most \begin{equation} K_{\max}=\left\lceil\frac{n_f^{\max}}{m}\right\rceil =\left\lceil\frac{32}{4}\right\rceil =8 \end{equation} Fast temporal blocks.
The three ROI feature scales are extracted from visual Transformer layers 12, 18, and 24.
Each original feature has width 1024 and is transformed by $2\times2$ spatial merging followed by an independent projection to a 2560-channel spatial feature map.
Before ROI extraction, the first two feature maps are upsampled by transposed convolutions with spatial factors of $4$ and $2$, producing 640 and 1280 channels, respectively; the third feature map retains 2560 channels.
A $7\times7$ ROI Align operation is applied independently at all three scales.
The resulting ROI features are concatenated along the channel dimension to obtain $640+1280+2560=4480$ channels and are linearly projected to 2560 channels at each spatial location.
The resulting $7\times7\times2560$ region representation is flattened to a 125,440-dimensional vector and passed through a $125440\rightarrow2560\rightarrow2560$ MLP with GELU activation to obtain the appearance vector.
This vector is the object appearance feature denoted by $a_t$ in the main text.
Bounding-box geometry is exposed explicitly through the additional Fast geometry input $g_t$ in Appendix~\ref{app:encoding}, rather than being treated as part of the symbol $a_t$.

\subsection{Optimization and Runtime Configuration}
\label{app:optimization-config}
We train VTrajEmbed using fused AdamW (\texttt{adamw\_torch\_fused}) with an initial learning rate of $5\times10^{-5}$, $\beta_1=0.9$, $\beta_2=0.999$, $\epsilon=10^{-8}$, and zero weight decay.
The learning rate follows a linear decay schedule from its initial value to zero, with no warm-up (\texttt{warmup\_steps}=0).
Each training process reads four expression-centered training samples per step.
We use one process per GPU on eight NVIDIA A800 GPUs with 80\,GB memory each and one gradient-accumulation step.
The effective global batch size is therefore \begin{equation} 4\times8\times1=32 \end{equation} expression samples per optimizer update.
Training is controlled by an epoch budget of three epochs; no positive \texttt{max\_steps} value overrides this budget.
The training random seed is 42.
The software environment uses PyTorch~2.10.0, torchvision~0.25.0, and Transformers~5.17.0.
Unless otherwise specified, architectural ablations use the same data split, optimizer, learning-rate schedule, batch configuration, training duration, and random seed as the full model.
Objective ablations modify only the corresponding loss terms.

\subsection{Expression-Group Construction and Noisy-GT Augmentation}
\label{app:augmentation-config}
Each expression-centered training sample contains four trajectory records.
One clean ground-truth trajectory and one confirmed negative are always retained.
The remaining two records are filled by either (i) one noisy-GT trajectory and one matched raw positive, or (ii) two matched raw positives when the noisy-GT variant is unavailable.
Thus, each expression group contains one clean GT, zero or one noisy GT, one or two matched raw positives, and one confirmed negative while maintaining a fixed group size of four trajectory records.
Noisy-GT augmentation is enabled by default.
The temporal augmentation first removes one contiguous segment comprising approximately 10\%-20\% of the trajectory observations and then independently drops each remaining observation with probability 0.1.
The augmented trajectory is required to retain at least one valid observation.
The clean ground-truth trajectory is maintained as a separate positive record and is not replaced by the noisy variant.
For spatial perturbation, the box center is independently displaced by up to $\pm5\%$ of the original box width and height along the horizontal and vertical axes, respectively.
The box width and height are independently scaled by factors in $[0.9, 1.1]$.
Perturbed boxes are clipped to the image boundaries.
If clipping produces a degenerate box, the original box for that observation is restored.

\section{Candidate Generation and Encoder Details}
\label{app:encoding}

\subsection{Candidate generation and task scope}
\label{app:candidates}
VTrajEmbed separates proposal generation from embedding-based ranking. A candidate $r=\{(t,b_t)\}$ stores nonempty bounding boxes $b_t$ in original-image coordinates and their physical frame indices $t$. It represents the observations associated with one tracker identity. ``Complete trajectory'' means retaining the available identity track rather than clipping it to the action interval; it does not imply that every physical appearance of the object was successfully tracked. VTrajEmbed neither predicts new boxes nor regresses event boundaries.

The general query-conditioned proposal interface uses a frozen Qwen3-0.6B parser to extract appearance entities, retaining attributes such as clothing and color while discarding action and interaction descriptions. For example, a query about a man in a white suit and a woman in a black dress walking together yields two appearance prompts. Each prompt drives the frozen SAM3 tracker, and the union of its outputs forms the proposal pool. An appearance-matching object is retained even when its behavior does not match the query. When several queries are processed together, their prompts and proposals can be deduplicated and shared. This mode conditions proposal recall on the query, although the embedding of an already specified trajectory remains query-independent.

Text2Traj-Bench generates candidate trajectories from fixed category prompts and freezes each video's candidate pool after trajectory-quality and redundancy review, before verified description and query generation.
Subsequent description or query rejection does not alter the frozen pool.
The current person-oriented description examples and source setting should not be interpreted as evidence of performance on every object category. Other categories can use the same construction procedure, but require their own prompts, quality checks, and validation. The benchmark does not rerun the appearance parser to prune candidates separately for each evaluated query.

SAM3 detects objects at anchor frames and propagates masks forward and backward over the video timeline. Bounding boxes are extracted from nonempty masks and mapped to the original image coordinates. The encoder consumes rectangular bbox ROIs, not segmentation masks. Empty masks provide no observation; propagation failures and identity switches are tracking errors, not guarantees resolved by the encoder. Both the parser and tracker remain frozen, and neither receives gradients from the embedding objective. Replacing the tracker changes candidate recall and quality, so results using different proposal pools are not directly comparable as ranking results.

\subsection{Sampling and temporal blocks}
Sort each trajectory's $n(r)\geq1$ observations by timestamp. The Fast branch selects $n_f(r)=\min(n(r),n_f^{\max})$ distinct observations uniformly over this ordered list. The reference maximum is $n_f^{\max}=32$. If multiple observations are selected, evenly spaced list indices include both endpoints; a single observation uses the first available index. Sampling is performed independently for each trajectory without modifying the stored observations.

The Slow branch first preselects $n_s(r)=\min(n_s^{\mathrm{budget}},n_f(r))$ distinct observations, where the configured budget is one to four. Preselection uses evenly spaced indices in the ordered Fast observations, including endpoints when more than one frame is selected and using the first observation when only one is selected. These are positions in the valid-observation list, not a claim of uniform physical-time spacing across occlusions. Let $S_0$ denote this initial set and let $\bar g_i$ be its unit-normalized global semantic-image features from the frozen auxiliary encoder, referred to below as CLIP-style features.

Next, scan $S_0$ chronologically and maintain a retained set $S$. Keep the first frame. For each subsequent frame $i$, find $j^*=\arg\max_{j\in S}\bar g_i^\top\bar g_j$, breaking ties by timestamp. If the similarity is at least a fixed threshold $\tau\in[-1,1]$, discard $i$ and assign $\pi(i)=j^*$; otherwise retain $i$ and set $\pi(i)=i$. Retained representatives are never deleted, and removed frames are not replenished. Therefore
\begin{equation}
\label{eq:slow-selection}
1\leq m_s(r)=|S|\leq n_s(r),\qquad
\bar g_i^\top\bar g_{\pi(i)}\geq\tau\quad(i\in S_0).
\end{equation}
This is semantic pruning of a preselected set, rather than selection by maximum semantic difference. The representative condition applies to the final retained set: a chain of pairwise merges that later deletes a representative would not provide the same guarantee. The auxiliary-encoder comparison occurs before high-resolution encoding; only retained frames enter the Slow vision encoder and temporal resampler. The Fast observations, ROI features, and temporal blocks are unchanged by this pruning.

The assignment defines cluster sizes $\kappa_j=|\{i\in S_0:\pi(i)=j\}|$, which may be recorded for auditing. The current attention resampler receives the retained features without cluster-size reweighting. These counts are used to analyze the resulting change in attention mass, not to introduce a new weighted pooling module. Fix the auxiliary encoder checkpoint, preprocessing, preselection budget, and threshold independently of test performance, and report the observed retained-frame counts. The threshold controls the semantic covering radius $\sqrt{2(1-\tau)}$; a larger threshold tightens this guarantee but does not imply a monotone retained count for every greedy run.

With the block length fixed to $m=4$, the valid block count is \begin{equation} K_r=\left\lceil\frac{n_f(r)}{4}\right\rceil. \end{equation}
Because the Fast pathway uses at most $n_f^{\max}=32$ observations, the fixed maximum slot count is \begin{equation} K_{\max}=\left\lceil\frac{32}{4}\right\rceil=8. \end{equation}
Each Fast temporal block therefore contains at most four valid observations.
The final block contains only its actual observations; short trajectories are never extended by copying frames.
Only the $K_r$ valid block features represent trajectory evidence at the language-model input; any implementation-level padding up to $K_{\max}$ is masked and does not represent additional observations.
The complete experimental configuration is summarized in Appendix~\ref{app:experimental-details}.

\subsection{Fast branch: ROI features and geometry}
The Fast pathway uses the frozen vision tower initialized from \texttt{Qwen3-VL-4B-Instruct}, which is based on SigLIP2-Large and has undergone dynamic-resolution adaptation and multimodal continued training.
Low-resolution full frames are processed with a per-frame pixel budget $P_f=65{,}536$.

For each input frame, we extract spatial Transformer features from layers 12, 18, and 24, using one-based layer indices.
The original feature width at each tap is 1024.
Each tapped feature is first subjected to $2\times2$ spatial merging and an independent projection, producing three spatial feature maps with 2560 channels while preserving their spatial organization.

The frozen target-ROI projection module is initialized from \texttt{ObjEmbed-4B}.
Before ROI extraction, the first two feature maps are upsampled by transposed convolutions with spatial factors of $4$ and $2$, reducing their channel widths to 640 and 1280, respectively, while the third feature map remains at 2560 channels.
For each tracked object box, we apply $7\times7$ ROI Align independently to all three scales and concatenate the resulting region features along the channel dimension, yielding \[ 640+1280+2560=4480 \] channels at each ROI grid location.
A position-wise linear projection maps this 4480-dimensional representation to 2560 dimensions.

The resulting $7\times7\times2560$ region tensor is flattened into a 125,440-dimensional vector and passed through a $125440\rightarrow2560\rightarrow2560$ MLP with GELU activation to produce an appearance vector.
This frozen 2560-dimensional appearance feature is denoted $a_t\in\mathbb R^{2560}$, matching the main-text notation.
The complete checkpoint and interface configuration is summarized in Appendix~\ref{app:model-config}.

For normalized box center $(c_x,c_y)$, width $w$, height $h$, and area $A_t=wh$, the geometric input is
\begin{equation}
\label{eq:geometry}
g_t=\left[c_x,c_y,w,h,A_t,d_t,\log\frac{A_t}{A_{t^-}},
\frac{t}{\max(T-1,1)}\right],\qquad
\widetilde a_t=a_t+\operatorname{MLP}_{\mathrm{geo}}(g_t).
\end{equation}
Here $T$ is the original video length, $t^-$ is the preceding sampled observation, and $d_t$ is the Euclidean displacement of normalized centers. Degenerate boxes are discarded, so areas are positive. The first observation has zero displacement and zero log-area change. Displacement is measured between observations and is not a time-normalized velocity; timestamps expose unequal sampling intervals to subsequent temporal modeling. The geometry MLP is $\operatorname{Linear}(8,H)\to\operatorname{GELU}\to\operatorname{Linear}(H,H)$. A shared, single-latent multi-head attention resampler with LayerNorm aggregates each observation block into $f_i$. The blocks retain chronological order before entering the language model.

\subsection{Slow branch and fusion}
The Slow branch independently runs the same frozen SigLIP2-Large-based vision tower initialized from \texttt{Qwen3-VL-4B-Instruct} at pixel budget $P_s=200{,}704$.
It produces full-frame token banks $X_t^s$ with $N_t$ spatial tokens.
The cross-attention projections accommodate the vision-encoder and object-projector feature widths. The matching Fast appearance feature $a_t$, before explicit geometry is added, supplies the cross-attention query in Equation~\ref{eq:fusion}. ROI extraction anchors the query to the tracked object, but does not guarantee distinct attention patterns for nearby objects with similar appearance. In particular, implicit positional information in visual features is not a substitute for a guaranteed geometric correspondence.

The Slow resampler pools the $m_s(r)$ retained context features into one vector $c$, without cluster-size reweighting. Pruning changes the temporal attention mass as well as the number of frames; Lemma~\ref{lem:dualres} explicitly includes this effect. Without temporal positional inputs this pooling is permutation-invariant. Accordingly, it represents aggregate scene context rather than temporally aligned interactions or event order. Fast features and timestamps carry the ordered evidence. Rapidly changing interactions that need high-resolution detail are a limitation of this division of labor.

Fusion uses $z_i=\operatorname{LN}(f_i+\alpha W_sc)$ for valid blocks only. The scalar $\alpha$ is initialized to zero, so the initial representation is Fast-only. At that initialization, gradients to the Slow residual's parameters through this gate are also zero; the gate can update first, after which the context pathway can receive gradients. Padded slots are set to zero after fusion and excluded from attention, rather than receiving the shared context residual.

\subsection{Language-model inputs and adaptation}
The shared language backbone is initialized from the language component of \texttt{Qwen3-VL-4B-Instruct}, with hidden width $H_{\mathrm{LM}}$. The visual slot embedding is $W_{\mathrm{in}}z_i+\delta_{\mathrm{step}}$, where $W_{\mathrm{in}}$ denotes LayerNorm followed by a linear projection from the fused feature width to $H_{\mathrm{LM}}$.
The two retrieval paths are instantiated as separate prompts, matching the input isolation described in the main text. The trajectory prompt is constructed from the valid trajectory steps for a candidate and the visual readout:
\begingroup\small
\begin{verbatim}
Trajectory steps: <trajectory_step> ... <trajectory_step>
Read the complete temporal representation at <action_visual_readout>.
\end{verbatim}
\endgroup
The text prompt is a separate text-only input containing the query expression and the text readout:
\begingroup\small
\begin{verbatim}
Match the expression to a trajectory: {expression}
Read the text representation at <action_text_readout>.
\end{verbatim}
\endgroup
The trajectory pass injects the $K_r$ valid fused trajectory embeddings into the trajectory-step positions and reads $h_v$ at the visual readout; any padding used for batching is masked outside the prompt. The text pass uses only the text prompt above and reads $h_e$ at the text readout. It does not instantiate valid trajectory-step inputs; if fixed-size visual-slot tensors are reserved by the implementation, they are zeroed and masked so that no trajectory evidence is active. Thus $q(e)$ depends only on the query and the text prompt. Input isolation is established by these separate passes and masks, not by causal attention alone.

The output maps $W_v$ and $W_t$ are independent bias-free linear projections followed by L2 normalization. The logit scale starts at $\ell=\log(1/0.05)$, so the initial inverse temperature is $\exp(\ell)=20$ and the temperature is $0.05$. A positive shared scale changes the training distribution but not cosine rankings for a fixed query.

The pretrained language-model weights and language head remain frozen. LoRA adapters of rank 8, scaling parameter 16, and dropout 0.05 are inserted in the query and value projections of the last eight layers. Trainable modules are the geometry MLP; Fast and Slow resamplers; Slow cross-attention, query projection, residual projection, and gate; input and output projections; step and readout-token deltas; the LoRA parameters; and the logit scale. The vision encoder and ROI projector are frozen.
Exact checkpoint identifiers, optimization settings, batch configuration, training duration, hardware, software versions, and the random seed are summarized in Appendix~\ref{app:experimental-details}.

\section{Training Data and Relation Annotations}
\label{app:training}

\subsection{Annotated positives and overlap-based screening}
\label{app:pos}
A query group is indexed by $(\mathrm{video\_id},\mathrm{expression\_id})$.
For each expression-centered training sample, the batch constructor targets four trajectory records.
One clean ground-truth trajectory and one confirmed negative are always retained.
The remaining two records are filled by either one noisy-GT trajectory and one identity-checked matched raw trajectory, or by two matched raw positives when the noisy-GT variant is unavailable.
Thus, a training group contains one clean GT, zero or one noisy GT, one or two matched raw positives, and one confirmed negative.

An annotation may cover only the query event: its observed bbox sequence is used as supervision without claiming boxes beyond its annotated interval.
A matched tracker trajectory retains its full available extent and is not cropped to that interval.
This distinction reconciles event-limited supervision with identity-track retrieval.

Let $G_b\ne\varnothing$ be the target's annotated visible-frame set and $a_t(r)$ its box IoU with candidate $r$, taking missing observations as zero. Compute metrics on original observations, before encoder sampling:
\begin{equation}
\label{eq:iou}
\operatorname{maxIoU}_b(r)=\max_{t\in G_b}a_t(r),\quad
C_\tau(r)=\frac{\sum_{t\in G_b}\mathbf1[a_t(r)\geq\tau]}{|G_b|},\quad
\operatorname{meanIoU}_b(r)=\frac{\sum_{t\in G_b}a_t(r)}{|G_b|}.
\end{equation}
Candidates with $C_{0.5}\geq0.8$ are ordered by descending coverage, descending mean IoU, and then track ID. Include the highest-ranked candidate that retains the target identity throughout and preserves the query's visual evidence, if one exists. Identity checking outside $G_b$ is necessary because overlap inside the annotated interval cannot rule out later identity switches.

\textbf{Noisy-GT trajectory augmentation.}
Noisy-GT augmentation is enabled by default.
Starting from the clean ground-truth trajectory, we first remove one contiguous segment comprising approximately 10\%-20\% of its observations and then independently drop each remaining observation with probability 0.1.
The resulting noisy trajectory is required to retain at least one valid observation.
The clean ground-truth trajectory is preserved independently as an unperturbed positive record.

For each retained box, the horizontal and vertical center coordinates are independently displaced by up to $\pm5\%$ of the original box width and height, respectively.
The box width and height are independently multiplied by scale factors in $[0.9,1.1]$.
Perturbed boxes are clipped to the image boundaries.
If clipping produces a degenerate box, the original box for that observation is restored.
The exact augmentation configuration is summarized in Appendix~\ref{app:augmentation-config}.

\textbf{Negative verification.}
The threshold $\operatorname{maxIoU}_b(r)<\tau_{\mathrm{safe}}=0.2$ proposes a low-overlap negative. It is not a semantic certificate: a same-person fragment outside $G_b$ can satisfy this inequality, and another person can satisfy an underspecified query. We therefore define $\mathcal N_b$ to contain low-overlap proposals confirmed not to match $e_b$. Exclude known same-identity fragments and unresolved cases. Tracks with intermediate overlap, uncertain identity, or inadequate evidence are ignored. Positive and negative memberships are query-relative and must never conflict for the same encoded input.

\subsection{Model-assisted, human-verified training data}
For accepted training examples without manually annotated target tubes, an LLM verbalizes the human-approved trajectory facts $D_r^H$ into a natural query $e_b$, following Appendix~\ref{app:benchmark-description}.
The query's required predicates must be supported by the approved facts, including their temporal scope.
Human verification against the complete within-video candidate pool determines query admission and supplies the target positive and confirmed within-video negatives, as described in Appendix~\ref{app:benchmark-relations}.
Cross-video candidates are mined as easy-negative candidates, hard-negative candidates, or potential matches according to Appendix~\ref{app:benchmark-cross}.
These mining categories and text similarities propose comparisons rather than determine relation labels.
Only individually human-verified matches and mismatches populate the core batch mask; unverified or unresolved pairs receive $M_{bp}=\bot$.
Calibrated automatic easy negatives, when used, are separately identified as training-only pseudo-labels and excluded from the human-verified core and benchmark ground truth.

Annotated and expanded examples can be mixed in training. A source contributes to the annotated branch only when it actually supplies object tubes or when such tubes have been separately annotated. Video captions or action-class labels alone do not provide target boxes. Thus videos sourced from UCF-101, MSR-VTT, or MSVD do not automatically become human tube-annotated examples; source-specific supervision and any added annotation must be documented. HC-STVG-style event annotations can provide tube supervision, but ranking complete proposed tracks is different from the original joint spatial and temporal grounding task.

\subsection{Batch mask, duplicate records, and empty comparisons}
The list $\mathcal P$ in Section~\ref{sec:mask} indexes records, not physical identities. An identical encoded trajectory may occur in several source groups; separately perturbed inputs remain distinct records. Every source query matches its own positive records. Other entries are assigned by the query-trajectory relation: propagate a positive label only when the other query is also supported by that trajectory's evidence, assign zero only to established mismatches, and otherwise abstain. Shared identity alone need not establish that an event description is visible in every partial trajectory.

Within a video, existing positive/negative annotations and verified query judgments provide these relations. Across different videos, box IoU is undefined as an identity test, and different video IDs do not prove semantic mismatch. An independently verified relation may be used; otherwise the entry remains $\bot$. Consequently the batch term supplies cross-video comparisons only where those labels exist. This is a limitation of conservative supervision, not an assumption that all off-diagonal pairs are negatives.

The numerator of $L_{t\to v}$ must include \emph{all records} satisfying $M_{bp}=1$, including matching records from other groups. Using only the source set $\mathcal P_b$ in that numerator while admitting additional matching records into its denominator would incorrectly penalize those records. Repeated records act as multiplicity weights in the log-sum-exp. For identical encoded inputs their scores are tied, and their derivative contributions add; their positions in different loss expressions do not make the underlying similarities independent.

Rows and columns contain a positive by construction, avoiding undefined logarithms. A row or column with positives but no confirmed negatives contributes zero. Likewise $L_{\mathrm{group}}=0$ for an empty $\mathcal B_N$. Ignored pairs have no direct score derivative in the corresponding masked objective, though their shared encoder parameters may still change through other pairs. The multi-positive objective rewards total positive probability and does not require uniform probability across positive variants.

\section{Text2Traj-Bench Construction Details}
\label{app:benchmark}

This appendix details Text2Traj-Bench's three-stage construction for object-level video grounding and relation supervision for within-video discrimination and bidirectional batch retrieval.
The stages follow Section~\ref{sec:text2traj}: Trajectory Curation, Verified Description, and Relation Construction.
Models propose candidate trajectories, descriptions, text queries, and query-trajectory relations, while human review determines the core annotations.
Source assignments are fixed before construction: duplicate clips and clips from the same recording remain together, including all derived text queries and trajectory variants.
Construction settings are selected on training and development sources.
Cross-video training relations never connect to held-out sources, including reserved sources of overlapping external evaluation datasets.

\subsection{Trajectory Curation}
\label{app:benchmark-curation}
\label{app:benchmark-consolidation}

\textbf{Proposal and quality review.}
Candidate trajectory proposals are generated independently of text queries by a frozen SAM~3 tracker using a shared category-prompt inventory rather than query-specific appearance prompts.
Each proposal retains its available bounding boxes and original timestamps, as defined in Appendix~\ref{app:candidates}.
The prompt inventory, checkpoint, anchor-frame policy, and tracking settings are versioned; ``default prompts'' alone do not specify a reproducible configuration.
Proposal recall is limited by this process.

Three independent annotators determine proposal usability from temporal playback and target-marked frames.
Observation count, temporal coverage, contiguous support, box area, and abrupt position or scale changes assist review rather than automatically determine acceptance.
Reviewers label each proposal usable, unusable, or uncertain based on object recognizability, identity continuity, tracking drift, and visual support.
At least two usable votes are required for retention.
Cases without a majority undergo adjudication and receive another recorded decision; a proposal without two approvals is not retained.
Small size, short visibility, rapid motion, and occlusion are not rejection criteria by themselves.
Individual votes and exclusion reasons are retained.

\textbf{Duplicate and fragment review.}
Duplicate and fragment review consolidates confirmed redundant trajectories while preserving distinct objects and nonredundant fragments as separate candidates.
Spatiotemporal overlap and visual similarity propose duplicate pairs.
All retained candidate trajectories are also displayed together to identify missed pairs.
Three annotators independently judge each pair as having the same identity, different identities, or uncertain identity.
At least two supporting votes are required to establish either identity decision.
Same identity alone does not make two trajectory records redundant: consolidation removes only confirmed redundant observations and retains the highest-quality representative with deterministic tie-breaking.
Nonredundant fragments remain separate without interpolating missing observations.
Inconsistent pairwise decisions are adjudicated before duplicate groups are formed; transitive merging must not override a confirmed different-identity judgment.

The candidate pool $\mathcal R(V)$ is frozen after curation and remains unchanged by subsequent description or query rejection.
Unresolved identity pairs remain separate candidates and are flagged for relation review rather than deleted to make text queries unique.
A query matching multiple fragments is rejected, whereas a query supported by only a particular fragment is assessed against that fragment's actual observations.
A video requires at least two candidate trajectories and one accepted text query for benchmark inclusion.
Subsequent description failure removes only a query proposal, not its candidate trajectory.
Correcting a subsequently discovered tracking defect requires a new candidate-pool version and renewed verification of all affected queries.

\subsection{Verified Description}
\label{app:benchmark-description}
\label{app:benchmark-evidence}

\textbf{Structured visual evidence.}
For each retained candidate trajectory, a VLM proposes descriptions and structured facts grounded in its ordered visual observations.
Its inputs contain chronologically ordered crops and selected temporal segments.
Target-marked full frames are additionally provided for spatial relations, interactions, and scene context.
The VLM returns separate appearance and motion descriptions together with structured facts recording the attribute or event, value, evidence frames or interval, temporal scope, and known or unknown status.
Appearance fields describe visible properties rather than inferred demographics.
Action entries distinguish event intervals and participants, while direction entries record their image-relative convention.
Unobserved intervals and occlusions are recorded rather than treated as evidence of absence.
Sampling and targeted evidence expansion are recorded together with their budgets.

\textbf{Three-person verification.}
Three independent annotators verify the proposed descriptions and structured facts against the video to establish the human-approved facts $D_r^H$.
They label each description usable, usable after minor correction, unusable, or uncertain.
Minor corrections may remove unsupported details, simplify an overstated action, or correct a clearly visible attribute; extensive rewriting requires a new proposal.
All edits are consolidated into a single final version, which must itself receive at least two approvals.
Agreement that different edits are individually acceptable does not establish approval of their union.
Unresolved fields remain unknown and cannot supply facts for text query generation.
The resulting $D_r^H$ retains approved statements and their supporting evidence rather than providing an exhaustive account of everything that occurred.
A rejected description does not remove its candidate trajectory from $\mathcal R(V)$.

\textbf{Text query generation.}
An LLM generates a natural text query $e$ and its required predicates $P(e)$ from $D_r^H$ alone, without access to competing candidate trajectories.
Each predicate must be entailed by the human-approved facts, including its temporal and relational qualifiers; lexical copying is unnecessary.
Facts from different intervals cannot be combined into an unsupported simultaneous event.
Text query generation uses a fixed, recorded proposal budget, and minor revisions are versioned.
The next stage checks query faithfulness and naturalness; any changed wording is rechecked against the complete candidate pool $\mathcal R(V)$.
These are model-generated text queries rather than independently collected user utterances.
More fluent paraphrasing alone does not establish coverage of the distribution of natural user queries.

\subsection{Relation Construction}

\subsubsection{Within-video Verification}
\label{app:benchmark-relations}

\textbf{Complete human verification.}
Within-video relation annotation verifies every proposed text query against the complete candidate pool $\mathcal R(V)$, including candidate trajectories without approved descriptions.
With $Q_V$ proposed queries and $K_V$ candidate trajectories, the relation matrix has shape $Q_V\times K_V$, not necessarily $K_V\times K_V$.
LLM pre-labels may highlight relevant facts, but a human annotator verifies every query-trajectory pair using the actual visual observations and available context.
Disputed decisions receive an independent second review, and relations that remain uncertain retain $\bot$.
Query faithfulness and naturalness are checked in the same pass.

During construction, relation annotation distinguishes confirmed matches, confirmed mismatches, and uncertain relations, consistent with the relation mask $M_{bp}$ in Section~\ref{sec:mask}.
We distinguish the LLM proposal $\widehat Y(e,r)$, the final annotation $\bar Y(e,r)$, and the unobserved true semantic relation $Y(e,r)$.
The labels $1$, $0$, and $\bot$ respectively denote a confirmed match, a confirmed mismatch, and an uncertain relation under the following policy:
\begin{equation}
\label{eq:relation-policy}
\bar Y(e,r)=
\begin{cases}
1, & \text{all required predicates are jointly supported},\\
0, & \text{a necessary condition is demonstrably violated},\\
\bot, & \text{otherwise}.
\end{cases}
\end{equation}

Both confirmed matches and confirmed mismatches must respect the text query's temporal scope and participant bindings.
Observing running in one interval does not refute ``walks at some point.''
A confirmed mismatch based on an absent event requires adequate coverage of the relevant interval; omitted or uncertain attributes do not establish a mismatch.
Human reviewers may inspect additional existing observations when the description is insufficient.
Joint support additionally requires consistent event timing and participant binding.

\textbf{Admission and within-video discrimination.}
Benchmark admission establishes a unique target trajectory and provides confirmed relations for within-video discrimination.
Equation~\ref{eq:unique} requires one confirmed target match and confirmed mismatches for every alternative candidate trajectory in $\mathcal R(V)$.
An additional confirmed match or any uncertain relation blocks admission without altering the candidate pool or other queries.
Consequently, uncertain labels can remain in construction logs and in training masks for unverified pairs, but admitted benchmark queries have binary within-video relation rows: one confirmed target match and confirmed mismatches for all alternatives.
This establishes an annotation-based decision within the candidate pool rather than infallible uniqueness in the physical scene.
Accepted training queries define the positives and confirmed within-video negatives of each query-centered trajectory group:
\begin{equation}
\label{eq:groups-from-relations}
\mathcal P_b=
\{r\in\mathcal R(V_b):\bar Y(e_b,r)=1\},
\qquad
\mathcal N_b=
\{r\in\mathcal R(V_b):\bar Y(e_b,r)=0\}.
\end{equation}
These sets supply the within-video discrimination objective $L_{\mathrm{group}}$.
Training-time trajectory augmentations are handled separately in Appendix~\ref{app:training}.
The core benchmark uses only accepted single-target text queries; rejected proposals and their reasons remain in construction logs rather than providing training supervision.
Every released held-out query has a fully human-reviewed row of within-video relations, not merely a sampled check of automatic labels.

\subsubsection{Cross-video Expansion}
\label{app:benchmark-cross}

\textbf{Stratified candidate mining.}
For bidirectional training, cross-video expansion mines additional query-trajectory comparisons exclusively from training queries and trajectories.
Each text query is compared with the candidate trajectory's facts rather than merely with another target's caption.
Mining combines structured compatibility checks with appearance and motion similarity to form three strata.
Easy-negative candidates exhibit obvious category or basic-property conflicts.
Hard-negative candidates have otherwise similar descriptions but a possible conflict in a necessary attribute or event.
Potential matches have descriptions that may support the text query.
These strata guide sampling rather than determine final relation labels, and similarity ranks candidate pairs only.
Category hierarchies, multicolored clothing, temporal changes, and concurrent actions prevent different field values from automatically establishing a confirmed mismatch.
Random cross-video query-trajectory pairs supplement similarity-based mining to audit its selection bias.
Mining budgets, random seeds, encoders, and stratum assignments are logged; duplicate pairs retain their provenance but are reviewed once.

\textbf{LLM proposal and human confirmation.}
Mined query-trajectory pairs receive LLM relation proposals, while individual human verification determines the confirmed relations in the core relation graph.
For each mined pair, the LLM receives $e$, $P(e)$, and $D_r^H$, and predicts $\widehat Y(e,r)\in\{1,0,\bot\}$ together with the decisive predicates and evidence references.
It applies Equation~\ref{eq:relation-policy} and assigns $\bot$ when the descriptions leave the relation unresolved, including unresolved event scope or joint support.
A textual omission does not establish a confirmed mismatch, and high similarity does not establish a confirmed match.
Potential matches and hard-negative candidates receive pairwise human review with access to visual evidence; disagreements prompt independent adjudication.
Easy-negative candidates are reviewed in stratified samples.
Individually confirmed pairs enter the core relation graph, while unverified pairs remain $\bot$.
Mining strata and final relation labels are stored separately, allowing a mined hard-negative candidate to become a confirmed match or an uncertain relation.

An optional training-only expansion uses calibrated automatic easy negatives as pseudo-labels separately from the human-verified core.
Automatic easy negatives are accepted only after independent human calibration on development pairs selected by the same rule.
Acceptance uses a predeclared upper bound on error with a confidence interval rather than LLM self-confidence.
Uncertain audit outcomes count against acceptance.
Rule-specific sample sizes, errors, coverage, and drift checks must be reported, and the rule is frozen before application.
These relations are tagged as pseudo-labels, excluded from benchmark ground truth, and evaluated separately from the human-verified core.
No fixed numerical accuracy or cost saving is assumed.

\textbf{From verified relations to bidirectional batch retrieval.}
Confirmed within-video and cross-video relations populate the relation mask $M_{bp}$ for bidirectional batch retrieval when the corresponding text queries and trajectory records occur in the same batch.
To match Section~\ref{sec:mask}, mining for this objective targets candidate trajectories with an accepted source query, making them eligible for the indexed list of positive records $\mathcal P$.
Relation-aware batching co-selects linked query-centered trajectory groups so that their verified relations are available for the row and column comparisons of $L_{\mathrm{bi}}$.
A confirmed mismatch involving a trajectory absent from $\mathcal P$ supplies no comparison for $L_{\mathrm{bi}}$.
Missing relations are assigned $\bot$ rather than treated as confirmed mismatches, and each row and column retains its source positive.
Reporting must include realized cross-video positive and negative counts and zero-negative row and column rates, not only the size of the offline relation graph.
A positive-only comparison contributes zero loss.
Transferring a relation label to an augmented trajectory record requires checking that the evidence deciding that particular relation is preserved; identity agreement alone is insufficient.

\paragraph{Shared auditing and release.}
\label{app:benchmark-audit}

Independent auditing and traceable release records document annotation reliability and construction provenance.
Independent auditors are blinded to model suggestions and previous votes and inspect a random sample of accepted records together with a separately reported sample of difficult cases.
The audit records residual errors, pre-adjudication agreement, three-vote distributions, corrections, query rejection reasons, and reviewer time per accepted query or relation.
Text query and candidate-trajectory manifests store the queries, candidate and target trajectory IDs, original trajectory observations, source and split assignments, and candidate-pool versions.
Linked fact, vote, and relation records retain evidence, correction history, mining strata, provenance, and pseudo-label status.
Prompts, checkpoints, manifests, budgets, and audit statistics must be supplied from actual construction runs.
Majority approval is an annotation rule rather than a guarantee of correctness.

\paragraph{Evaluation scope.}
\label{app:benchmark-evaluation}

Evaluation measures object-level video grounding through similarity-based ranking of candidate trajectories within each video's fixed candidate pool $\mathcal R(V)$.
Models receive the text query $e$, video $V$, and candidate trajectories in $\mathcal R(V)$; target trajectory IDs, verified descriptions, relation labels, and annotation evidence tags are not model inputs.
Every method ranks the same complete candidate pool for each text query.
Reporting includes within-video R@1, candidate-count strata $2$, $3$-$5$, $6$-$10$, and $>10$, and a random-ranking baseline obtained by averaging $1/|\mathcal R(V)|$ over text queries.
Single-frame and motion-dependent analyses examine whether temporal evidence is necessary, separately from whether a unique target trajectory is established.

Beyond ranking within the candidate pool, an independently sampled source subset is required to estimate proposal recall, post-curation retention, and end-to-end success.
The admission and selection events satisfy
\begin{equation}
\label{eq:end-to-end}
\Pr(S\cap A)=\Pr(A)\Pr(S\mid A),
\end{equation}
where $A$ denotes benchmark admission and $S$ denotes correct target-trajectory selection.
Both factors must be measured on the same source subset; within-video R@1 on the screened benchmark cannot replace the conditional accuracy measured on that subset.
Reverse, cross-video, and global retrieval require separate galleries and relevance protocols and are not established by within-video ranking scores.

\section{Lemma 1: Task-Relevant Preservation under Slow-Fast Sampling}
\label{app:proof1}

\subsection{Reference representations and semantic coverage}
Fix a trajectory, its Fast observation set $\mathcal F$, and all trained parameters. The analysis concerns deterministic encoding, with dropout disabled. The Slow branch first obtains $S_0\subseteq\mathcal F$ by uniform index preselection and then greedily prunes it to $S$, as specified in Equation~\ref{eq:slow-selection}. Write $n_s=|S_0|$ and $m_s=|S|$. A representative $\pi(i)\in S$ is assigned to every $i\in S_0$, including $\pi(j)=j$ for retained frames. Since retained representatives are never deleted, all assignments satisfy the threshold in the final set.

For each potential Slow frame, $u_i$ denotes the object-conditioned context from Equation~\ref{eq:fusion}, evaluated using the same Fast ROI query and the same high-resolution encoder. The features of frames excluded at runtime are reference quantities for analysis, not additional computation in the deployed model. Let
\begin{equation}
\label{eq:context-references}
U_0=[u_i]_{i\in S_0},\quad
\widetilde U_0=[u_{\pi(i)}]_{i\in S_0},\quad
U_S=[u_j]_{j\in S},\quad
c_0=R(U_0),\quad c=R(U_S).
\end{equation}
The proxy list $\widetilde U_0$ has the same length as $U_0$ and repeats each retained representative according to its cluster size. It is not the actual pruned input. The dense-context reference is $c_D=R([u_t]_{t\in\mathcal F})$, which would encode high-resolution context at every Fast observation. The temporal resampler summarizes a global context vector; the reference is not a reconstruction of temporally ordered scene interactions. The Fast temporal sequence is fixed throughout this comparison.

\textbf{Conditions.}
The lemma uses three explicit conditions. First, on assigned pairs, CLIP-style auxiliary semantic similarity controls object-conditioned context difference up to a residual:
\begin{equation}
\label{eq:clip-context-condition}
\|u_i-u_{\pi(i)}\|_2\leq L_u\|\bar g_i-\bar g_{\pi(i)}\|_2+\eta,
\qquad L_u\geq0,\quad\eta\geq0.
\end{equation}
Second, at the preselected input length, the resampler satisfies
\begin{equation}
\label{eq:resampler-stability}
\|R(U)-R(U')\|_2\leq K_R\max_i\|u_i-u'_i\|_2
\end{equation}
on the feature domain of interest. Third, preselection approximates the dense summary with $\|c_0-c_D\|_2\leq\varepsilon_{\mathrm{pre}}$. This last quantity includes the effects of finite coverage and changing the number of inputs to attention. The conditions do not assert that auxiliary semantic-image features are sufficient for every object interaction or that sparse pooling is automatically accurate.

The first condition is weaker than equating auxiliary semantic similarity with context equality: the residual $\eta$ allows discrepancies between global scene similarity and the object's contextual state. A large $\eta$ makes the resulting guarantee weak, as it should. The second condition holds with suitable finite constants for standard fixed-query attention on bounded feature domains, with finite projection norms and stable postprocessing. Output normalization additionally requires a positive lower bound on the pre-normalization embedding norm. These are local stability conditions to justify or measure; the proof does not assume that arbitrary learned representations have small sensitivity constants.

\subsection{Proof of semantic-pruning and representation bounds}
\textbf{Restatement of Lemma~\ref{lem:dualres}.}
Under the conditions above, define $\rho_\tau=L_u\sqrt{2(1-\tau)}+\eta$ and $\varepsilon_{\mathrm{mult}}=\|R(\widetilde U_0)-R(U_S)\|_2$. The context errors satisfy
\begin{equation}
\label{eq:context-pruning-bound}
\|c-c_0\|_2\leq K_R\rho_\tau+\varepsilon_{\mathrm{mult}},\qquad
\|c-c_D\|_2\leq\varepsilon_{\mathrm{pre}}+K_R\rho_\tau+\varepsilon_{\mathrm{mult}}.
\end{equation}

\begin{proof}
Unit normalization and the representative condition imply
\[
\|\bar g_i-\bar g_{\pi(i)}\|_2^2
=2-2\bar g_i^\top\bar g_{\pi(i)}\leq2(1-\tau).
\]
Equation~\ref{eq:clip-context-condition} gives $\max_i\|u_i-u_{\pi(i)}\|_2\leq\rho_\tau$. The original and proxy lists have equal length, so Equation~\ref{eq:resampler-stability} yields $\|R(U_0)-R(\widetilde U_0)\|_2\leq K_R\rho_\tau$. A triangle inequality through $R(\widetilde U_0)$ proves the first bound. Another triangle inequality through $c_0$, using the preselection condition, proves the second. The multiplicity change is handled separately because fixed-length stability does not apply directly to lists of different lengths.
\end{proof}

Let $F_r=(f_1,\ldots,f_{K_r})$ be the unchanged Fast sequence, and let $\Phi(F_r,c)$ denote fusion, language-model aggregation, projection, and L2 normalization. For the fixed candidate assume
\[
\|\Phi(F_r,c)-\Phi(F_r,c')\|_2\leq K_c\|c-c'\|_2.
\]
Applying this to the second context bound proves Equation~\ref{eq:slowfast-preservation}, with $v_D(r)=\Phi(F_r,c_D)$. The constants and errors may depend on the candidate; the ranking argument uses their maximum over the fixed candidate pool.

\textbf{Retrieval consequence.}
For a unit text embedding $q$, Cauchy-Schwarz gives $|q^\top v(r)-q^\top v_D(r)|\leq\delta_r$. Suppose $r^*$ is correctly ranked first by the reference, with cosine margin
\begin{equation}
\label{eq:reference-margin}
\gamma=q^\top v_D(r^*)-\max_{r\ne r^*}q^\top v_D(r)>0.
\end{equation}
Writing $\delta=\max_r\delta_r$, every competitor satisfies
\[
q^\top v(r^*)-q^\top v(r)
\geq q^\top v_D(r^*)-q^\top v_D(r)-\delta_{r^*}-\delta_r
\geq\gamma-2\delta.
\]
Thus $\gamma>2\delta$ preserves the correct top-1 decision. The positive logit scale does not change this ranking statement. The reference must itself be correct; no correctness guarantee is inferred merely from using more high-resolution frames.

\subsection{Attention mass after deleting repeated representatives}
Semantic closeness alone does not control the full effect of pruning. The proxy input repeats representative $j$ exactly $\kappa_j=|\{i:\pi(i)=j\}|$ times, while the actual input includes it once. For a fixed-query attention head $h$, let $z_{hj}$ be its query-key logit and $v_{hj}$ its projected value for retained feature $u_j$. The two outputs can be written using distributions on the same representative set:
\begin{equation}
\label{eq:attention-masses}
p_{hj}=\frac{\kappa_je^{z_{hj}}}{\sum_{k\in S}\kappa_ke^{z_{hk}}},\qquad
q_{hj}=\frac{e^{z_{hj}}}{\sum_{k\in S}e^{z_{hk}}},\qquad
y_h^{\mathrm{proxy}}=\sum_jp_{hj}v_{hj},\quad
y_h^{\mathrm{pruned}}=\sum_jq_{hj}v_{hj}.
\end{equation}
These formulas apply when the resampler has no per-frame positional or temporal bias that distinguishes repeated identical inputs, as in the order-invariant Slow pooling described in Appendix~\ref{app:encoding}. If $\|v_{hj}\|_2\leq V_h$, then
\begin{equation}
\label{eq:head-multiplicity-bound}
\|y_h^{\mathrm{proxy}}-y_h^{\mathrm{pruned}}\|_2
\leq\sum_j|p_{hj}-q_{hj}|\|v_{hj}\|_2
\leq V_h\|p_h-q_h\|_1.
\end{equation}
If the output projection and subsequent postprocessing together are $K_{\mathrm{post}}$-Lipschitz on concatenated head outputs, a computable upper bound is
\begin{equation}
\label{eq:multiplicity-bound}
\varepsilon_{\mathrm{mult}}\leq
K_{\mathrm{post}}\left(\sum_hV_h^2\|p_h-q_h\|_1^2\right)^{1/2}.
\end{equation}
For multiple fixed latent queries, apply the same calculation to each head-query pair and to the final aggregation. The bound separates perturbing a feature from changing the frequency with which its representative enters attention. Cluster counts and retained-frame attention logits determine the mass term without requiring high-resolution features for the deleted frames.

\textbf{Why this term cannot be omitted.}
Consider scalar values consisting of three copies of 0 and one copy of 1, with equal attention logits. Choose auxiliary semantic-image features identical within each value class and different between classes. Pruning exact duplicates leaves values 0 and 1. Even with zero within-cluster feature distortion, the unpruned output is $0.25$ and the pruned output is $0.5$. A bound depending only on the auxiliary-feature covering radius could incorrectly predict zero error. In contrast, equal cluster sizes give $p_h=q_h$ and zero multiplicity error. Adding $\log \kappa_j$ to the retained attention logits would also reproduce proxy multiplicities, but that would be a modified pooling rule; it is not assumed or introduced in the current encoder.

\subsection{Preselection coverage, stability, and Fast features}
The preselection error is distinct from the pruning error. To make this explicit, normalize original frame indices and assign each Fast observation $t\in\mathcal F$ to a nearest preselected observation $\sigma(t)\in S_0$. Define the actual temporal covering radius $h_0=\max_{t\in\mathcal F}|t-\sigma(t)|$. Uniform selection in the valid-observation list does not imply a fixed physical-time radius, particularly across occlusions.

If object-conditioned contexts obey $\|u_t-u_{\sigma(t)}\|_2\leq L_th_0$, use the repeated preselection proxy $[u_{\sigma(t)}]_{t\in\mathcal F}$. The same proof gives
\begin{equation}
\label{eq:preselection-bound}
\|c_0-c_D\|_2\leq
K_R^{D}L_th_0+\varepsilon_{\mathrm{pre,mult}},
\end{equation}
where $K_R^{D}$ is the fixed-length stability constant at $|\mathcal F|$ inputs, and $\varepsilon_{\mathrm{pre,mult}}$ is the change from the repeated proxy to the unweighted preselected input. Equation~\ref{eq:multiplicity-bound} applies with the preselection assignment counts. This provides one sufficient choice of $\varepsilon_{\mathrm{pre}}$ and shows why slow variation helps, while not silently assuming that changing attention multiplicities is harmless. A small time covering radius alone does not eliminate the mass term.

For completeness, a fixed-query attention head has a finite feature-perturbation constant on bounded inputs. Suppose corresponding keys and values differ by at most $L_k d$ and $L_v d$, value norms are bounded by $V$, and the fixed query has norm at most $Q$. For key dimension $d_k$, logits differ by at most $QL_kd/\sqrt{d_k}$. The softmax Jacobian gives $\|\operatorname{softmax}(z)-\operatorname{softmax}(z')\|_1\leq2\|z-z'\|_\infty$, so the head-output difference is at most
\[
\left(L_v+\frac{2VQL_k}{\sqrt{d_k}}\right)d.
\]
Combining heads and the Lipschitz postprocessing map yields a valid $K_R$. This argument needs bounded projections and a stable feature domain, and does not claim that the constants are small in every trained model.

Fusion also exposes the context sensitivity explicitly. If LayerNorm is $K_{\mathrm{LN}}$-Lipschitz on valid block inputs, the stack of fused blocks changes by at most
\[
\sqrt{K_r}\,K_{\mathrm{LN}}|\alpha|\|W_s\|_2\,\|c-c'\|_2
\]
in Frobenius norm. Subsequent input projection, backbone aggregation, and normalized readout contribute their own stability factors to $K_c$. A positive lower bound on the unnormalized readout norm is needed for the last normalization. At $\alpha=0$, the representation is independent of context, consistent with the initialization. The lemma concerns fixed parameters and does not imply that the gate or the representation error changes monotonically during training.

The reference in Lemma~\ref{lem:dualres} uses the same low-resolution Fast sequence and a denser Slow context. Thus the theorem specifically justifies context compression while keeping the motion observations unchanged. If a higher-quality Fast reference $F_r^*$ is also considered, define $\varepsilon_F=\|F_r-F_r^*\|$ and assume joint stability
\[
\|\Phi(F_r,c)-\Phi(F_r^*,c_D)\|_2
\leq K_F\varepsilon_F+K_c\|c-c_D\|_2.
\]
The total representation bound then adds $K_F\varepsilon_F$ to Equation~\ref{eq:slowfast-preservation}. This isolates rather than assumes away the effect of low Fast resolution. The reference $\Phi(F_r^*,c_D)$ fixes the same dense context; comparison to a system that also changes the ROI queries or context encoder needs to include those differences in its context error. Tiny objects and fast, detailed interactions can violate the small-error regime.

\subsection{Encoding cost and scope of the design argument}
Let $C_s$ be the cost of one high-resolution full-frame encoder pass under a common frame budget. Preselection followed by pruning reduces the number of Slow passes from $n_s$ to $m_s$, saving $(n_s-m_s)C_s$ before accounting for pruning overhead. If $C_{\mathrm{aux}}$ and $C_{\mathrm{prune}}$ are additional costs relative to the unpruned baseline, a sufficient condition for a saving in this portion of the pipeline is
\begin{equation}
\label{eq:pruning-cost}
(n_s-m_s)C_s>C_{\mathrm{aux}}+C_{\mathrm{prune}}.
\end{equation}
If auxiliary semantic-image features are shared or cached, only their incremental cost belongs on the right. Actual frame sizes, resampling overhead, ROI reuse, and caching across candidates must be included in measured end-to-end cost. No positive saving is guaranteed when no frame is removed or when the additional work dominates.

Ignoring auxiliary modules, the visual-encoder cost proxy is $n_fP_f+m_sP_s$, with $P_f=65{,}536$ and $P_s=200{,}704$. The Fast frame limit remains 32; the Slow preselection budget is one to four, and the actual retained count can be smaller. These are implementation budgets, not information saturation points. The design allocates temporal coverage to the Fast branch and uses the pruning tolerance to bound one source of error in sparse high-resolution context. It does not establish strict dominance over every uniform-resolution encoder, lossless compression of all scene semantics, or a generally optimal threshold. The bridge residual, attention mass discrepancy, preselection error, reference margins, and actual runtime require validation on representative data. Uniform preselection followed by auxiliary-encoder pruning does not by itself guarantee correct object tracking or recall of an event absent from the observations.

\section{Lemma 2: Task-Aligned Multi-Positive Ranking Control}
\label{app:proof2}

\subsection{A common retrieval objective}
Each local, text-to-trajectory, or trajectory-to-text term in Section~\ref{sec:mask} is a comparison $A$ with a nonempty positive index set $A^+$ and a disjoint negative index set $A^-$. Indices refer to the records actually included in that comparison; repeated records retain their multiplicities. All scores are finite. Write
\begin{equation}
\label{eq:generic-loss}
Z_A^+=\sum_{j\in A^+}e^{s_j},\qquad
Z_A^-=\sum_{j\in A^-}e^{s_j},\qquad
\mathcal L_A=-\log\frac{Z_A^+}{Z_A^++Z_A^-}
=\log\left(1+\frac{Z_A^-}{Z_A^+}\right).
\end{equation}
The fraction is the total softmax probability of the correct answer set. This objective marginalizes over valid answers instead of arbitrarily selecting a single positive and penalizing the others. Its retrieval interpretation is that the highest-ranked answer should belong to that set; it does not require a uniform distribution over positives or that every positive outrank every negative.

For $m_A=|A^+|\geq1$, define the conservative top-1 error
\begin{equation}
\label{eq:ranking-error}
E_A=\begin{cases}
\mathbf1\!\left[\max_{j\in A^-}s_j\geq\max_{j\in A^+}s_j\right],&A^-\ne\varnothing,\\
0,&A^-=\varnothing.
\end{cases}
\end{equation}
Ties between a highest-scoring positive and negative count as errors, so the bound also applies to implementations that sometimes break such ties in favor of a positive. Its semantic interpretation assumes that the included labels correctly identify matching and non-matching inputs. No claim is made about excluded, unconfirmed pairs.

\subsection{Proof of the per-comparison bound}
\textbf{Restatement of Lemma~\ref{lem:dualoss}, first part.}
For every comparison above, $\log(1+1/m_A)E_A\leq\mathcal L_A$. In particular, $\mathcal L_A<\log(1+1/m_A)$ guarantees that the best positive strictly outranks every included negative.

\begin{proof}
If $A^-$ is empty, both $E_A$ and $\mathcal L_A$ are zero. If $E_A=0$, the claim follows from $\mathcal L_A\geq0$. Otherwise, set $s^+=\max_{j\in A^+}s_j$. A negative has score at least $s^+$, while each of the $m_A$ positives has score at most $s^+$. Hence
\begin{equation}
\label{eq:ranking-proof}
Z_A^-\geq e^{s^+},\qquad Z_A^+\leq m_Ae^{s^+},\qquad
\mathcal L_A\geq\log\left(1+\frac{1}{m_A}\right).
\end{equation}
This is the desired bound when $E_A=1$, completing all cases. The strict-loss threshold follows by contraposition.
\end{proof}

\textbf{Sharpness and multiple positives.}
The constant cannot be increased without additional assumptions: take $m_A$ positives and one negative, all with score zero. Then $E_A=1$ and $\mathcal L_A=\log(1+1/m_A)$. For a single positive, the threshold is $\log2$. With multiple positives the constant becomes smaller, reflecting that aggregated positive probability can be large even when one negative ties or exceeds each individual positive. Exact duplicate records must be counted in $m_A$ because they also appear repeatedly in the implemented sums; counting only distinct identities would not justify this bound for the current objective. The guarantee concerns the best valid answer, not alignment of every positive variant.

\subsection{Joint control of the three retrieval families}
Let $\mathcal F_g=\mathcal B_N$ index local comparisons, $\mathcal F_t=\mathcal B$ index batch rows, and $\mathcal F_v=\mathcal P$ index batch columns. Their positive and negative sets are
\begin{align}
\label{eq:comparison-families}
A_{g,b}^+&=\mathcal P_b,& A_{g,b}^-&=\mathcal N_b,\\
A_{t,b}^+&=\mathcal J_b^+,& A_{t,b}^-&=\{p\in\mathcal P:M_{bp}=0\},\nonumber\\
A_{v,p}^+&=\mathcal I_p^+,& A_{v,p}^-&=\{b\in\mathcal B:M_{bp}=0\}.\nonumber
\end{align}
For a nonempty family $\mathcal F_x$, define
\begin{equation}
\label{eq:family-risk}
E_x=\frac{1}{|\mathcal F_x|}\sum_{A\in\mathcal F_x}E_A,\qquad
m_x=\max_{A\in\mathcal F_x}|A^+|,\qquad \omega_x=\log(1+1/m_x).
\end{equation}
If a family is empty, set its error and loss to zero and use $m_x=1$. For example, this covers a batch with no local negatives. A row or column with no confirmed negatives likewise has zero loss and zero included-comparison error; it imposes no discrimination constraint.

\textbf{Restatement of Lemma~\ref{lem:dualoss}, second part.}
With $\lambda_g,\lambda_{\mathrm{bi}}>0$, the objective
\[
L=\lambda_g L_{\mathrm{group}}+
\frac{\lambda_{\mathrm{bi}}}{2}(L_{t\to v}+L_{v\to t})
\]
satisfies Equation~\ref{eq:joint-ranking-control}. This is a deterministic bound on the empirical errors for the included comparisons; it does not require independent comparisons or disjoint score pairs.

\begin{proof}
For any $A\in\mathcal F_x$, $m_A\leq m_x$, so $\omega_x\leq\log(1+1/m_A)$. The first part of the lemma gives $\omega_xE_A\leq\mathcal L_A$. Averaging over each family yields
\begin{equation}
\label{eq:family-bounds}
\omega_gE_g\leq L_{\mathrm{group}},\qquad
\omega_tE_t\leq L_{t\to v},\qquad
\omega_vE_v\leq L_{v\to t}.
\end{equation}
The definitions make these inequalities valid for empty families as well. Multiplying by $\lambda_g$, $\lambda_{\mathrm{bi}}/2$, and $\lambda_{\mathrm{bi}}/2$, respectively, and adding proves the joint bound. Score sharing between comparisons does not affect any inequality.
\end{proof}

\textbf{Implication for loss design.}
The local term controls object selection among confirmed candidates from the same video. The batch row term controls text-to-trajectory matching over the verified batch relations, while the column term controls the reverse selection of a valid description for each trajectory. A local negative absent from the batch's positive-record list receives direct supervision only through the local term. Conversely, a verified cross-video mismatch absent from a query's local pool can enter the batch objective. These are conditional sources of additional comparisons, not a claim that every batch supplies both kinds. The reverse direction also changes the conditional normalization even when the underlying score pairs overlap. Thus the joint objective accounts for the intended retrieval decisions without requiring disjoint negative sets.

\subsection{Relation masking and direct supervision}
The positive numerator must contain every included record known to match the query. A match erroneously included only in the denominator is treated as a negative, breaking the intended semantic interpretation of the error bound. Unknown relations are excluded from both sets; different video IDs alone do not certify a mismatch. Algebraically the lemma applies to the supplied labels, while its interpretation as control of correct semantic retrieval depends on those labels being accurate.

The derivative calculation provides an additional consistency check. With $Z_A=Z_A^++Z_A^-$,
\begin{equation}
\label{eq:generic-gradient}
\frac{\partial\mathcal L_A}{\partial s_j}=
\begin{cases}
e^{s_j}/Z_A>0,&j\in A^-,\\
e^{s_j}(1/Z_A-1/Z_A^+)\leq0,&j\in A^+,\\
0,&j\notin A^+\cup A^-.
\end{cases}
\end{equation}
The positive-pair derivative is strictly negative exactly when that comparison has a negative. For a batch mismatch $M_{bp}=0$, let $D_b=\sum_{j\in\mathcal J_b}e^{s_{bj}}$ and $C_p=\sum_{a\in\mathcal I_p}e^{s_{ap}}$. Its derivative in the bidirectional term is
\begin{equation}
\label{eq:batch-negative-gradient}
\frac{\partial L_{\mathrm{bi}}}{\partial s_{bp}}
=\frac{e^{s_{bp}}}{2}\left(\frac{1}{|\mathcal B|D_b}+
\frac{1}{|\mathcal P|C_p}\right)>0.
\end{equation}
Matching pairs have nonpositive contributions and ignored pairs have zero direct contributions. Identical query-input pairs have tied scores: the derivative with respect to that shared score sums the occurrence derivatives. Consistent labels therefore give sign-consistent contributions even when a score appears in both losses. Distinct loss indices do not make the underlying similarities independent.

\textbf{Scope of the justification.}
The bound supports the form of a multi-positive objective and the joint treatment of local and bidirectional retrieval. It does not establish an optimal ratio of $\lambda_g$ to $\lambda_{\mathrm{bi}}$; the adopted $0.1:1$ weighting requires empirical validation. A small average loss controls average included-comparison error, not necessarily every individual comparison. The result neither bounds errors on unconfirmed or unseen candidates nor accounts for tracker misses, annotation mistakes, or train-test distribution shifts. It also does not guarantee that the three empirical errors decrease monotonically during training. Shared encoder parameters couple score changes: for a small gradient step,
\[
\Delta s_i\approx-\eta\sum_j\frac{\partial L}{\partial s_j}
\langle\nabla_\theta s_i,\nabla_\theta s_j\rangle.
\]
Consequently the direct derivative signs are not a guarantee on every score update. Loss/mask ablations, relation-quality audits, and held-out retrieval evaluation remain necessary to establish the empirical benefit.

\end{document}